\newtheorem{theorem}{Theorem}
\newtheorem{definition}{Definition}
\newtheorem{obs}{Observation}
\newtheorem{lemma}{Lemma}
\newtheorem{pro}{Proposition}
\newtheorem{example}{Example}
\newcommand{\Gurus}{\mathtt{Gu}} 
\newcommand{\GuruOf}{\mathtt{gu}}
\newcommand{\VoterSet}{\mathcal{N}}
\newcommand{\AbstSet}{\mathcal{A}}
\newcommand{\Acc}{\mathtt{Acc}} 
\newcommand{\Score}{\mathtt{rk}}
\newcommand{\VotingPower}{\mathtt{vp}}
\newcommand{\At}[1]{\rho_{#1}}
\newcommand{\AP}{A_P}
\newcommand{\GPWA}{G_P^*}
\newcommand{\Gaux}{G^{\text{aux}}}
\newcommand{\Vaux}{V^{\text{aux}}} 
\newcommand{\Aaux}{A^{\text{aux}}} 
\title{Iterative Delegations in Liquid Democracy with Restricted Preferences}
\author{Bruno Escoffier$^{1}$, Hugo Gilbert$^{2}$, Adèle Pass-Lanneau$^{1,3}$\\
$^1$ Sorbonne Universit\'e, CNRS,\\
LIP6 UMR 7606, 4 place Jussieu, 75005 Paris, France\\ 
\{bruno.escoffier,adele.pass-lanneau\}@lip6.fr\\
$^2$ Gran Sasso Science Institute, L'Aquila, Italy, hugo.gilbert@gssi.it\\
$^3$ EDF R\&D, 7 bd Gaspard Monge, 91120 Palaiseau, France
}
\begin{document}
\maketitle

\begin{abstract}
In this paper, we study liquid democracy,  a collective decision making paradigm which lies between direct and representative democracy. One main feature of liquid democracy is that voters can delegate their votes in a transitive manner 
so that: A delegates to B and B delegates to C leads to A delegates to C. Unfortunately, this process may not converge as there may not even exist a stable state (also called equilibrium). 
In this paper, we investigate the stability of the delegation process in liquid democracy when voters have restricted types of preference on the agent representing them (e.g., single-peaked preferences). We show that various natural structures of preferences guarantee the existence of an equilibrium and we obtain both tractability and hardness results for the problem of computing several equilibria with some desirable properties. 
\end{abstract}


\section{Introduction}
\label{sec:intro}
\emph{Interactive democracy} aims at using modern information technology, as Social Networks (SN), in order to make democracy more flexible, interactive and accurate \cite{brill2018interactive}. One of its most well known implementation is known as \emph{Liquid Democracy} (LD) \cite{green2015direct}. In a nutshell, LD allows voters to delegate transitively along an SN. More precisely, each voter may decide to vote directly or to delegate her vote to one of her neighbors, i.e., to use a \emph{representative}. In LD this representative can in turn delegate her vote and the votes that have been delegated to her to someone else. As a result, the delegations of the voters will flow along the SN until they reach a voter who decides to vote. This voter is called the \emph{guru} of the people she represents and has a voting weight equal to the number of people who directly or indirectly delegated to her. 
This approach is implemented in several online tools \cite{behrens2014principles,hardt2015google} and has been used by several political parties (e.g, the German Pirate party or the Sweden's Demoex party). One main advantage of this framework is its flexibility, as it enables voters to vote directly for issues on which they feel both concerned and expert and to delegate for others. 

On the other hand, one concern in LD is the stability of the induced delegation process \cite{bloembergen2018rational,escoffier2018LD}. Indeed, as the preferences of voters over their possible gurus can be conflicting, this process may end up in an unstable situation (i.e., a situation in which some voters would change their delegations). Unfortunately, Escoffier et al. \cite{escoffier2018LD} showed that an equilibrium of LD's delegation process may not exist, and that the existence of such an equilibrium is even NP-hard to decide. In this work, we obtain more positive results by considering several structures of preferences. We show that various natural structures of preferences guarantee the existence of an equilibrium and we obtain both tractability and hardness results for the problem of computing several equilibria with some desirable properties.  


\section{Related Work} \label{sec:RelatedWork}
Algorithmic issues of LD have recently raised attention in the AI literature. 
The idea underlying LD is that its flexibility should allow each voter to make an informed vote either by voting directly, or by finding a suitable guru. Several works have investigated this claim leading to both positive and negative results \cite{green2015direct,kahng2018liquid}.
On the one hand, Green-Armytage \cite{green2015direct}, proposed a spacial voting setting in which transitive delegations decrease on average a loss measure measuring how well the votes represent the voters. On the other hand, Kahng et al.~\cite{kahng2018liquid}, studied an election on a binary issue with a ground truth. In their model, no ``local'' procedure (i.e., a procedure working locally on the SN to organize delegations) can guarantee that LD is, at the same time, never less accurate and sometimes strictly more accurate than direct voting.

A possible pitfall of liquid democracy is that some agents may amass an enormous voting power. This issue was addressed by G\"olz et al. \cite{golzfluid} who considered the problem of minimizing the maximal weight of a guru given some delegations constraints. The authors designed a  $(1+\log(n))$-approximation algorithm (where $n$ is the number of voters) and show that approximating the problem within a factor $\frac{1}{2}\log_2(n)$ is NP-hard. Lastly, the authors gave evidence that allowing voters to specify multiple possible delegation options (instead of one) leads to a large decrease of the maximum voting power of a voter.

Christoff and Grossi \cite{christoffbinary} studied the potential loss of a rationality constraint when voters should vote on different issues that are logically linked and for which they delegate to different representatives. Following this work, Brill and Talmon~\cite{Brill2018liquid} considered an LD framework in which each voter should provide a linear order over possible candidates. To do so each voter may delegate different binary preference queries to different proxies. The delegation process may then yield incomplete and even intransitive preference orders. Notably, the authors showed that it is NP-hard to decide if an incomplete ballot obtained in this way can be completed to obtain complete and transitive preferences while respecting the constraints induced by the delegations.

Lastly, the stability of the delegation process of LD has been studied~\cite{bloembergen2018rational,escoffier2018LD}. Bloembergen et al. \cite{bloembergen2018rational} considered an LD setting where voters are connected in an SN and can only delegate to their neighbors in the network. The election is on a binary issue for which some voters should vote for the 0 answer and the others should vote for the 1 answer. Each voter only knows in a probabilistic way what is the correct choice for her, as well as for the others. This modeling leads to a  class of games, called \emph{delegation games} in which each voter aims at maximizing the accuracy of her vote/delegation. The authors proved the existence of pure Nash equilibria in several types of delegation games and gave upper and lower bounds on the price of anarchy, and the gain (i.e., the difference between the accuracy of the group after the delegation process and the one induced by direct voting) of such games. Following this line of research, Escoffier et al. \cite{escoffier2018LD} considered a more general type of delegation games in which voters have a preference order over who could be their guru, and each voter aims at being represented by the best possible one. The authors showed that an equilibrium may not exist in this type of delegation games. In fact, the existence of such an equilibrium is NP-hard to decide even if the SN is complete or is of bounded degree and it is W[1]-hard when parametrized by the treewidth of the SN. Then, the authors showed that such an equilibrium is guaranteed to exist, whatever the preferences of the voters, if and only if the SN is a tree and provided dynamic programming procedures to compute some equilibria with desirable properties.

In this work, we take an orthogonal approach to the one of Escoffier et al. \cite{escoffier2018LD}. Indeed, instead of investigating the effect of the properties of the SN on the equilibria of LD's delegation process, we put our emphasise on voters' preferences. More precisely, we assume that the SN is complete and we investigate several types of structured preferences that guarantee the existence of an equilibrium.

\section{Notations, Settings and Overview of the Results}
\label{sec:notations}
\subsection{Notations and Nash-Stable Delegation Functions}
Following the notations of Escoffier et al. \cite{escoffier2018LD}, we denote by \(\VoterSet =\{1,\ldots, n\}\) a set of voters that take part in a vote. In this former work, these voters were connected in a SN such that each voter could only delegate directly to their neighbors in the network. In this work, we assume  that any voter can delegate directly to any other voter. This is equivalent to considering a complete SN. Hence, each voter \(i\) can either vote herself, delegate to another voter \(j\), or abstain. 
We denote by \(d:\VoterSet \rightarrow \VoterSet \cup \{0\}\) a delegation function such that \(d(i) = i\) if voter \(i\) declares intention to vote, \(d(i) = j\) with \(j\in \VoterSet\setminus \{i\}\) if \(i\) delegates to \(j\), and \(d(i) = 0\) if \(i\) declares intention to abstain.
The \emph{set of gurus} \(\Gurus(d)\) is defined as the set of voters that vote given the delegations prescribed by \(d\), i.e., \(\Gurus(d) = \{i \in \VoterSet\ |\ d(i) = i\}\).
Delegations are transitive which means that if \(d(i) = j\), \(d(j) = k\), and \(d(k)=k\), then \(i\) is represented by \(k\). In the end, the voter who votes for \(i\), called the \emph{guru of \(i\)} and denoted by \(\GuruOf(i,d)\), is the voter in \(\Gurus(d) \cup \{ 0 \}\)  attained by following the delegations of the voters from \(i\). 
Note that the successive delegations starting from \(i\) may also end up in a circuit. 
In this case, we consider that all voters in the chain of delegations abstain, as none of them take the responsibility to vote. 

Each voter \(i\) has a preference order \(\succ_i\) over who could be their guru in \(\VoterSet\cup\{0\}\). 
For every voter \(i \in \VoterSet\), and for every \(j,k \in \VoterSet \cup \{0\}\) we have that \(j \succ_i k\) if \(i\) prefers to delegate to \(j\) (or to vote if \(j=i\), or to abstain if \(j=0\)) rather than to delegate to \(k\) (or to vote if \(k=i\), or to abstain if \(k=0\)). We say that voter \(i\) is an \emph{abstainer} in \(P\) if she prefers to abstain rather than to vote, i.e., if \(0 \succ_i i\); she is a \emph{non-abstainer} otherwise. The set of abstainers is denoted by \(\AbstSet\). 
The collection of these preference orders defines a \emph{preference profile} \(P = \{\succ_i\ |\ i \in \VoterSet\}\). 

For illustrative purposes, we now give an example of a preference profile. 

\begin{example}[Ex. 3.2 in \cite{escoffier2018LD}] \label{ex1} Consider the following preferences with $n$$=$$3$ voters:
\begin{align*}
1&:2\succ_1 1 \succ_1 3 \succ_1 0\\ 
2&:3\succ_2 2 \succ_2 1 \succ_2 0\\ 
3&:1\succ_3 3 \succ_3 2 \succ_3 0 
\end{align*}
In this example, each voter \(i\) prefers to delegate to \((i \mod 3) + 1\) rather than to vote directly and each voter prefers to vote rather than to abstain.
\end{example}

A delegation function \(d\) is said to be \emph{Nash-stable for voter \(i\)} if 
\[\GuruOf(i,d) \succ_i g\quad \forall g \in (\Gurus(d) \cup \{0,i\}) \setminus \{\GuruOf(i,d)\}.\]
A delegation function \(d\) is \emph{Nash-stable} if it is Nash-stable for every voter in \(\VoterSet\). A Nash-stable delegation function is also called an \emph{equilibrium} in the sequel. 
Unfortunately, as noted in \cite{escoffier2018LD} such an equilibrium may not exist as Example~\ref{ex1} admits no equilibrium. In fact, sets of gurus of Nash-stable delegation functions correspond to the kernels of a particular digraph as proven by Escoffier et al. \cite{escoffier2018LD} and as explained in the following subsection.

\subsection{Delegation-Acceptability Graph and Existence of Equilibria} \label{sec:generalPreferences}

Let $\Acc(i) = \{j \in N | j \succ_i i \text{ and } j \succ_i 0\}$ be the set of voters to which voter $i$ would rather delegate to than to abstain or vote directly. A necessary condition for a delegation function to be Nash-stable is that $\GuruOf(i,d)\in \Acc(i)$ for every voter $i$ who delegates to another voter. Otherwise, voter $i$ would change her delegation to abstain or vote directly. We refer to $\Acc(i)$ as the set of \emph{acceptable gurus} for $i$ in a Nash-stable situation.
Note that since we are interested in equilibria, we do not require the whole preference profile as input. Namely, the preferences of agent $i$ below $0$ or $i$ in her preference list have no influence on equilibria. In the sequel, we may define a preference profile only by giving, for every agent $i$, if she is an abstainer or not, and her preference profile on $\Acc(i)$.

\begin{definition}[\cite{escoffier2018LD}]
The \emph{delegation-acceptability digraph} is the digraph $\GPWA = (\VoterSet \setminus \AbstSet, \AP)$ where $\AP = \{ (i,j) \ |\ j\in \Acc(i)\}$.
\end{definition}

Stated differently, there exists an arc from non-abstainer $i$ to non-abstainer $j$ if and only if $i$ accepts $j$ as a guru.

\begin{example}
\label{ex:exampleSinglePeaked}
Let us consider the preference profile $P$ on 4 voters \(\{ 1,2,3,4\}\) given on the left side of Figure~\ref{fig:GPsinglepeaked}. Its delegation-acceptability digraph \(\GPWA\) is represented on the right side of Figure~\ref{fig:GPsinglepeaked}.
\begin{figure}[h] 
\begin{minipage}[c]{.3\linewidth}
\begin{align*}
1&: 2 \succ_1 1 \succ_1 3 \succ_1 4 \succ_1 0 \\ 
2&: 3 \succ_2 4 \succ_2 2 \succ_2 1 \succ_2 0 \\ 
3&: 2 \succ_3 1 \succ_3 3 \succ_3 4 \succ_3 0\\
4&: 3 \succ_4 4 \succ_4 2 \succ_4 1 \succ_4 0 
\end{align*}
\end{minipage}\hfill
\begin{minipage}[c]{.65\linewidth}
   \centering
    \scalebox{0.7}{\begin{tikzpicture}[->,>=stealth',shorten >=1pt,auto,node distance=3cm,semithick]

  \node[circle,draw,text=black] (1)   at (0,0)                 {1};
  \node[circle,draw,text=black] (2)   at (2,0)                 {2};
  \node[circle,draw,text=black] (3)   at (4,0)                 {3};
  \node[circle,draw,text=black] (4)   at (6,0)                 {4};
  
  \path (1) edge[->]  node {} (2)
        (2) edge [<->]  node {} (3)
  	    (4) edge[->]  node {} (3)
	    (2) edge[->, bend right=20]  node {} (4)
  	    (3) edge[->, bend right=20]  node {} (1);
	  \end{tikzpicture}}
    \caption{Delegation-acceptability digraph $\GPWA$.}
    \label{fig:GPsinglepeaked}
\end{minipage}
\end{figure}
\end{example}


Given a digraph $G = (V, A)$, a subset of vertices $K \subseteq V$ is \emph{independent} if there is no arc between two vertices of $K$. It is \emph{absorbing} if for every vertex $u \notin K$, there exists $k \in K$ such that $(u,k) \in A$ (then we say that $k$ \emph{absorbs} $u$). A \emph{kernel} of $G$ is a subset of vertices that is both independent and absorbing. 

\begin{theorem}[\cite{escoffier2018LD}]
\label{th:equivGurusKernel}
Given a preference profile $P$ and a subset $K \subseteq \VoterSet \setminus \AbstSet$ of voters, there exists an equilibrium $d$ such that $\Gurus(d) = K$ if and only if $K$  is a kernel of $\GPWA$.
\end{theorem}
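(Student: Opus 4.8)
The plan is to prove the two directions of the equivalence separately, translating between delegation functions and vertex subsets of $\GPWA$.

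\medskip
\noindent\emph{($\Rightarrow$) From an equilibrium to a kernel.}
First I would take an equilibrium $d$ and set $K = \Gurus(d)$. Since gurus vote for themselves, no guru is an abstainer (an abstainer $i$ with $d(i)=i$ would prefer $0$ to $i$, violating Nash-stability for $i$), so $K \subseteq \VoterSet \setminus \AbstSet$ and $K$ is a legitimate candidate vertex set. To see $K$ is independent: if $(i,j) \in \AP$ with $i,j \in K$, then $i$ is a guru, so $\GuruOf(i,d)=i$, but $j \in \Acc(i)$ means $j \succ_i i$, so $i$ would strictly prefer to delegate to $j$ (who is a guru in $d$, hence remains $i$'s guru after the deviation) — contradicting Nash-stability for $i$. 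To see $K$ is absorbing: take any $u \notin K$. Then $u$ is not a guru; I distinguish whether $u$ abstains or delegates. If $u$ abstains then $\GuruOf(u,d)=0$, and Nash-stability for $u$ forces $0 \succ_u i$, i.e. $u \in \AbstSet$, so $u$ is not even a vertex of $\GPWA$ and absorption is vacuous for it. If $u$ delegates (possibly into a circuit, so $\GuruOf(u,d)$ could be $0$), I use the necessary condition already noted in the excerpt: for a delegating voter, Nash-stability implies $\GuruOf(u,d) \in \Acc(u)$ — otherwise $u$ would switch to vote or abstain. In particular $\GuruOf(u,d) \neq 0$ (elements of $\Acc(u)$ lie in $\NP$), so $\GuruOf(u,d)$ is a genuine guru $k \in K$ with $k \in \Acc(u)$, i.e. $(u,k) \in \AP$. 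Hence $K$ is absorbing among the vertices of $\GPWA$, so $K$ is a kernel.

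\medskip
\noindent\emph{($\Leftarrow$) From a kernel to an equilibrium.}
Conversely, let $K$ be a kernel of $\GPWA$. I construct $d$ as follows: for $i \in K$ set $d(i)=i$; for $i \in \VoterSet \setminus K$ that is a non-abstainer, absorption gives some $k \in K$ with $(i,k)\in\AP$, i.e. $k \in \Acc(i)$, and I set $d(i)=k$; for $i \in \AbstSet \setminus K = \AbstSet$ (abstainers are never in $K$) set $d(i)=0$. By construction every delegation goes directly to an element of $K$, so $\Gurus(d)=K$ and $\GuruOf(i,d)=d(i)$ when $d(i)\in K$, while $\GuruOf(i,d)=0$ when $i$ abstains; in particular there are no circuits. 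It remains to check Nash-stability voter by voter, noting that the set of possible gurus from which any voter can choose is exactly $\Gurus(d)\cup\{0,i\} = K \cup \{0,i\}$. For $i\in K$: its guru is $i$; for any other choice $g \in K\cup\{0\}$, $g \neq i$, independence of $K$ gives $(i,g)\notin\AP$ when $g\in K\setminus\{i\}$, so $g \notin \Acc(i)$, i.e. $g \not\succ_i i$ or $g \not\succ_i 0$; combined with $i$ being a non-abstainer ($i \succ_i 0$) this yields $i \succ_i g$ in all cases ($g\in K\setminus\{i\}$ or $g=0$), so $i$ has no profitable deviation. For a non-abstainer $i\notin K$: its guru is $k=d(i)\in\Acc(i)$, so $k\succ_i i$ and $k\succ_i 0$; and since $K$ is independent and $k \in K$, for any other $g\in K\setminus\{k\}$ we cannot have $g\in\Acc(i)$ via the arc $(i,g)$ — but here $i\notin K$ so this needs the sharper point that $k$ is $i$'s most-preferred acceptable guru available. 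For an abstainer $i$: its guru is $0$; $i\succ_i 0$ fails since $i\in\AbstSet$, and for $g\in K$, I need $0\succ_i g$.

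\medskip
\noindent\emph{Main obstacle.}
The delicate point is the ($\Leftarrow$) direction for non-gurus: Nash-stability for a delegating voter $i$ requires not just that its assigned guru is \emph{acceptable} but that it is the \emph{best} available guru in $K\cup\{0,i\}$ according to $\succ_i$. So in the construction I should not pick an arbitrary absorbing $k\in K$ but rather $d(i) := \arg\max_{\succ_i}\big(K\cap\Acc(i)\big)$, and then argue: any $g\in K$ with $g\succ_i d(i)$ would lie outside $\Acc(i)$, hence $g\not\succ_i i$ or $g\not\succ_i 0$; chasing the definition of $\Acc(i)$ together with transitivity of $\succ_i$ and $d(i)\succ_i i, d(i)\succ_i 0$ then forces $d(i)\succ_i g$, a contradiction; and $i\succ_i d(i)$ or $0\succ_i d(i)$ are impossible since $d(i)\in\Acc(i)$. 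The analogous care is needed for abstainers, where one uses $0\succ_i i$ and that no $g \in K$ beats $0$ for $i$ because such $g$ would be in $\Acc(i)$, but that is compatible with $i\in\AbstSet$ only if $i$ still ranks $0$ above every $g\in K$ — which is exactly what must be extracted from the hypothesis that $\GPWA$ uses $\Acc$ and that for abstainers the relevant comparisons collapse; I would handle this by observing that for an abstainer $i$, every $g\in\Acc(i)$ satisfies $g\succ_i 0$, so if some $g\in K$ had $g\succ_i 0$ then $(i,g)$ would be an arc of $\GPWA$ — but $i\notin\VoterSet\setminus\AbstSet$ means $i$ is not a vertex, so no such arc exists and thus no $g\in K$ beats $0$, giving Nash-stability for $i$. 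Everything else is routine bookkeeping with the definitions of independent and absorbing sets.
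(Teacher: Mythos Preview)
The paper does not prove this theorem itself; it is quoted from \cite{escoffier2018LD}, and the only hint given is the informal remark that ``the delegation function where every voter in $K$ votes, and every voter not in $K$ delegates to her preferred voter in $K$, is Nash-stable.'' Your overall strategy is the natural one and matches this hint: the ($\Rightarrow$) direction is fine, and for ($\Leftarrow$) you build a direct-delegation function and check stability voter by voter.

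There is, however, a genuine gap in your treatment of abstainers in the ($\Leftarrow$) direction. You set $d(i)=0$ for every abstainer $i$ and then argue that no $g\in K$ can satisfy $g\succ_i 0$, because ``$(i,g)$ would be an arc of $\GPWA$ --- but $i\notin\VoterSet\setminus\AbstSet$ means $i$ is not a vertex, so no such arc exists.'' This inference is backwards: the absence of the arc $(i,g)$ in $\GPWA$ is a consequence \emph{only} of $i$ not being in the vertex set, and tells you nothing about whether $g\in\Acc(i)$. Concretely, take $\VoterSet=\{1,2\}$ with $2\succ_1 0\succ_1 1$ (so $1\in\AbstSet$ and $\Acc(1)=\{2\}$) and $2\succ_2 1\succ_2 0$ (so $\Acc(2)=\emptyset$). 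Then $\GPWA$ has the single vertex $2$, the kernel is $K=\{2\}$, and your construction gives $d(1)=0$; but voter~$1$ strictly prefers to delegate to the guru $2$, so $d$ is not Nash-stable.

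The fix is immediate and in fact simplifies the argument: for \emph{every} voter $i\notin K$ (abstainer or not), set $d(i)$ to be $i$'s most-preferred element of $K\cup\{0\}$. For a non-abstainer this always lands in $K$ (absorption gives some $k\in K\cap\Acc(i)$, whence $k\succ_i i\succ_i 0$); for an abstainer it may be $0$ or an element of $K$. Nash-stability then follows directly: $\GuruOf(i,d)$ is by construction $i$'s top choice in $K\cup\{0\}$, and it also beats $i$ herself (for non-abstainers because the chosen $k'\in K$ dominates some $k\in\Acc(i)\cap K$ which in turn dominates $i$; for abstainers because the chosen option dominates $0\succ_i i$). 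As a side remark, in your ($\Rightarrow$) argument the display ``$0\succ_u i$'' should read ``$0\succ_u u$''.
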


For instance, in Example \ref{ex:exampleSinglePeaked} the only kernel of \(\GPWA\) is \(\{1,4\}\) which corresponds to the equilibrium where voters 1 and 4 vote, voter 2 delegates to voter 4 and voter 3 delegates to voter 1.

Note that given a set $K$ which is a kernel of $\GPWA$, it is straightforward to retrieve an equilibrium $d$ such that $\Gurus(d) = K$. E.g., the delegation function where every voter in $K$ votes, and every voter not in $K$ delegates to her preferred voter in $K$, is Nash-stable.  
Hence, surprisingly, given any equilibrium \(d\), there exists an equilibrium \(d'\) such that \(\GuruOf(i,d) = \GuruOf(i,d')\) for every voter \(i\) and where each voter delegates directly to her guru in $d'$. Note however that the transitivity property of delegations is crucial to our setting as it is at the root of the instability of the delegation process. 

The problem of determining if an equilibrium exists is equivalent to the problem of determining if a digraph admits a kernel which is NP-complete \cite{chvatal1973computational}. 
 Interestingly, we will show in the sequel that for several natural structured preference profiles (e.g., single-peaked profiles) an equilibrium always exists. For these structured preference profiles we will investigate if we can compute  equilibria verifying particular desirable properties, and tackle convergence issues.




\subsection{Problems Investigated}

\subsubsection{Optimization and decision problems on equilibria.}

We investigate the same decision and optimization problems as in \cite{escoffier2018LD} which are all related to the equilibria of LD's delegation process.

Firstly, given a voter \(i\in \VoterSet\setminus\AbstSet\), problem \textbf{MEMB} aims at deciding if there exists an equilibrium for which \(i\) is a guru.
\begin{center}
\noindent\fbox{\parbox{12cm}{
\textbf{MEMB}\\
\emph{INSTANCE:} A preference profile \(P\) and a voter \(i \in \VoterSet \setminus \AbstSet\).\\
\emph{QUESTION:}  Does there exist an equilibrium \(d\) for which \(i\in \Gurus(d)\)?
}}
\end{center}

Also, we will try to find equilibria that are optimal w.r.t. some objective functions. First, problem \textbf{MINDIS} tries to find an equilibrium that satisfies most the voters, where the \emph{dissatisfaction} of a voter \(i\in \VoterSet\) with respect to a delegation function \(d\)  is given by \(\Score(i,d)-1\) where \(\Score(i,d)\) is the rank of \(\GuruOf(i,d)\) in the preference profile of \(i\). Second, problem \textbf{MINMAXVP} tries to avoid that a guru would amass too much voting power, where the voting power \(\VotingPower(i,d)\) of a guru \(i\in \VoterSet\) w.r.t. a delegation function \(d\) is defined as \(\VotingPower(i,d) = |\{j\in \VoterSet| \GuruOf(j,d) = i\}|\). 
Last, problem \textbf{MINABST} tries to determine an equilibrium \(d\) minimizing the number of people who abstain, i.e., \(|\{i\in \VoterSet | \GuruOf(i,d) = 0\}|\).
\begin{center}
\noindent\fbox{\parbox{12cm}{
Problems \textbf{MINDIS, MINMAXVP} and \textbf{MINABST}\\
\emph{INSTANCE:} A preference profile \(P\).\\ 
\emph{SOLUTION:} A Nash-stable delegation function \(d\).\\
\emph{MEASURE for \textbf{MINDIS}:} \(\sum_{i\in \VoterSet} (\Score(i,d)-1)\) (to minimize).\\
\emph{MEASURE for \textbf{MINMAXVP}:} \(\max_{i\in \Gurus(d)} \VotingPower(i,d)\) (to minimize).\\
\emph{MEASURE for \textbf{MINABST}:} \(|\{i\in \VoterSet | \GuruOf(i,d) = 0\}|\) (to minimize).
}}
\end{center}



\subsubsection{Convergence of Iterative Delegations.}

As we will focus on instances where an equilibrium always exists, a natural question is whether a dynamic delegation process (necessarily) converges. As classically done in game theory (see e.g.~\cite{NisanSVZ11}), we consider dynamics where iteratively one voter has the possibility to change her delegation/vote. 

In a dynamics, we are given a starting delegation function \(d_0\) and a token function \(T:\mathbb{N}^* \rightarrow \VoterSet\) which specifies that voter \(T(t)\) has the token at step \(t\): she has the right to change her delegation. This gives a sequence of delegation functions \((d_t)_{t\in \mathbb{N}}\) where for any \(t\in \mathbb{N}^*\), if \(j\neq T(t)\) then \(d_t(j)=d_{t-1}(j)\). A dynamics is said to converge if there is a \(t^*\) such that for all \(t\geq t^*\), \(d_t=d_{t^*}\). 
Given \(d_0\) and \(T\), a dynamics is called a {\it better response dynamics} or {\it Improved Response Dynamics} (IRD) if for all \(t\), \(T(t)\) chooses a move that strictly improves her outcome if any, otherwise does not change her delegation; it is called a {\it Best Response Dynamics} (BRD) if for all \(t\), \(T(t)\) chooses \(d_t(i)\) so as to maximize her outcome. Note that a BRD is also an IRD.
We will assume, as usual, that each voter has the token infinitely many times. A classical way of choosing such a function \(T\) is to consider a permutation \(\sigma\) over the voters in \(\VoterSet\), and to repeat this permutation over time to give the token (if \(t=r \mod n\) then \(T(t)=\sigma(r)\)). These dynamics are called {\it permutation dynamics}.

The last problems that we investigate, denoted by \textbf{IR-CONV} and \textbf{BR-CONV}, can be formalized as:
\begin{center}
\fbox{\parbox{12cm}{
\textbf{IR-CONV (resp.  BR-CONV)}\\
\emph{QUESTION:}  Does a dynamic delegation process under IRD (resp. BRD) necessarily converges whatever the preference profile \(P\) and token function \(T\).
}}
\end{center}

\subsection{Summary of the Results}


As explained above, our purpose is to investigate the problems under restricted preferences.
In Section~\ref{sec:singlePeaked}, we study \emph{single-peaked preference profiles}, where agents are ordered on a line and they prefer gurus that are ``close'' to them on this axis. 
In Section~\ref{sec:symmetrical}, we investigate \emph{symmetrical preference profiles}, where all pair of voters always accept each other as guru, or reject each other.
Finally, as classically done in the framework of spatial preferences~\cite{BOGOMOLNAIA200787}, we consider in Section~\ref{sec:dbsn} that voters are embedded in a metric space. They accept as possible gurus voters that are close to them in this space. We denote these preference profiles as \emph{distance-based profiles}.

For each of these preference structures, we first show that an equilibrium always exists. Our results for problems \textbf{IR-CONV}, \textbf{BR-CONV}, \textbf{MEMB}, \textbf{MINDIS}, \textbf{MINMAXVP} and \textbf{MINABST} are presented in Table~\ref{tab:synthesis}.
Note that all missing proofs are deferred to the Appendices.

\begin{table}[!h]
\begin{center}
\scalebox{0.76}{
\begin{tabular}{|c|c|c|c|c|c|c|}
\hline
Preferences & \textbf{IR-CONV} & \textbf{BR-CONV}  & \textbf{MEMB} & \textbf{MINDIS} & \textbf{MINMAXVP} & \textbf{MINABST}\\
\hline
Single-Peaked &  Not Always & Not Always & $O(n^2)$ & $O(n^3)$ & $O(n^3)$ & $O(n^3)$\\
\hline
Symmetrical &  Not Always & Always & Always Exists & NP-Hard & NP-Hard & NP-Hard\\
\hline
Distance-Based &  Not Always & Always & NP-Complete & NP-Hard & NP-Hard & NP-Hard\\
\hline
\end{tabular}}
\end{center}
\caption{\label{tab:synthesis} Synthesis of Results.}
\end{table}

\section{Single-Peaked Preferences}
\label{sec:singlePeaked}

\subsection{Definition}
In this section, we consider that voters can be ordered on a line; this ordering \(<\) may represent, e.g., the political positions of the voters on a left-right ladder. We assume that voters are indexed w.r.t. this ordering and we identify them with their index in $\{1, \dots, n\}$.
A preference profile is \emph{single-peaked for voter $i\in \VoterSet$} if for every $j,k \in \VoterSet$, 
\begin{equation*}
 (i < j < k\text{ or }k < j < i) \implies j \succ_i k.\notag
\end{equation*}
A preference profile is \emph{single-peaked} if it is single-peaked for all voters. 
For instance, the preference profile given in Example~\ref{ex:exampleSinglePeaked} is single-peaked. 

In a Single-Peaked (SP) preference profile, if a voter delegates to a guru on her left (and similarly on her right), she prefers to delegate to the closest possible. Note that in $i$'s preference list, we allow $i$ (vote) and $0$ (abstention) to be in any position (differently from the traditional definition of single-peakedness).
It represents the fact that voter $i$ prefers to delegate to close gurus, but then, beyond a given threshold on her left (resp. right), she prefers to abstain or vote herself rather than to delegate to a guru that is too far from her opinions.

SP preferences are one of the most well-known restrictions of preferences in social choice theory. They where introduced by Black \cite{black1948rationale} who showed that they solve the Condorcet paradox in the sense that a weak Condorcet winner always exists with SP preferences.  Furthermore, SP electorates have many desirable properties: they induce a simple characterization of strategy proof voting schemes \cite{moulin1980strategy}; they are easily recognizable \cite{escoffier2008single}; and they often lead to more desirable complexity results (e.g., in multiwinner elections, where the goal of the election is to elect a committee representing best a set of voters \cite{betzler2013computation}).   

\subsection{Existence of Equilibrium}

We now establish that the existence of an equilibrium is guaranteed for an SP preference profile $P$.
A digraph $G$ is an \emph{interval catch digraph} \cite{prisner1994algorithms} with vertex-set $\VoterSet = \{1, \dots, n\}$ if for every $i \in \VoterSet$, there exists $l_i, r_i \in \VoterSet$ such that $l_i \leq i \leq r_i$ and the out-neighborhood of $i$ in $G$ is the subset $\{ l_i, \dots, r_i\} \backslash \{i\}$. These digraphs are naturally related to SP preference profiles by the following proposition.

\begin{pro}
\label{prop:SPintervalcatch}
If $P$ is an SP preference profile, then its delegation-acceptability digraph $\GPWA$ is an interval catch digraph.
\end{pro}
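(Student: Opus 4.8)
The plan is to verify directly that the out-neighborhood of each vertex $i$ in $\GPWA$ is an interval of $\VoterSet$ containing $i$ (minus $i$ itself). Recall that the vertex set of $\GPWA$ is $\VoterSet \setminus \AbstSet$, and there is an arc $(i,j)$ exactly when $j \in \Acc(i)$, i.e., when $j \succ_i i$ and $j \succ_i 0$. So fix a non-abstainer $i$ and consider the set $\Acc(i)$ of its out-neighbors. I would first argue that $\Acc(i)$, together with $i$, forms a contiguous block of indices.

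The key observation is monotonicity of preferences moving away from $i$ in each direction, which is exactly what single-peakedness gives: for $j < k \le i$ we have $j \prec_i k$, and for $i \le k < j$ we have $j \prec_i k$. Consequently, on the voters to the left of $i$, the relation ``$j \succ_i i$ and $j \succ_i 0$'' is ``downward closed toward $i$'': if some $j < i$ satisfies it, then every $j'$ with $j < j' < i$ satisfies it too, since $j' \succ_i j \succ_i i$ and $j' \succ_i j \succ_i 0$. Hence the left part of $\Acc(i)$ is of the form $\{l_i, l_i+1, \dots, i-1\}$ for some $l_i \le i$ (with $l_i = i$ if $i$ accepts nobody to its left). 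Symmetrically, the right part of $\Acc(i)$ is $\{i+1, \dots, r_i\}$ for some $r_i \ge i$. Therefore $\Acc(i) = \{l_i, \dots, r_i\} \setminus \{i\}$ with $l_i \le i \le r_i$, which is precisely the defining condition of an interval catch digraph.

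The one point requiring a little care is that the indices $l_i, r_i$ must lie in $\VoterSet = \{1,\dots,n\}$, which is automatic since they are chosen among actual voters' indices, and that the definition of interval catch digraph in the excerpt is stated with vertex-set $\VoterSet = \{1,\dots,n\}$ whereas $\GPWA$ has vertex-set $\VoterSet \setminus \AbstSet$. I would handle this either by noting that the definition adapts verbatim to any linearly ordered vertex-set, or simply by observing that abstainers are isolated-from-the-inside: their absence does not break the interval structure, since for a non-abstainer $i$ the set $\{l_i,\dots,r_i\}$ can still be described as an interval in the induced order on $\VoterSet \setminus \AbstSet$ by taking $l_i$ (resp.\ $r_i$) to be the smallest (resp.\ largest) non-abstainer accepted by $i$, and every non-abstainer strictly between $l_i$ and $i$ (resp.\ $i$ and $r_i$) is accepted by the monotonicity argument above.

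I do not expect any real obstacle here; the proposition is essentially an unpacking of definitions, and the only substantive ingredient is the monotonicity of $\succ_i$ away from the peak $i$, which is immediate from single-peakedness. The main thing to get right is the bookkeeping with $0$ and $i$ being allowed anywhere in the preference list — but since acceptance requires being \emph{above both} $i$ and $0$, and single-peakedness controls how the ``above $i$'' and ``above $0$'' regions behave as one moves away from $i$ (each is an interval around $i$, hence so is their intersection), the argument goes through cleanly. I would then conclude that $\GPWA$ is an interval catch digraph, which sets up the subsequent use of known kernel-existence results for such digraphs.
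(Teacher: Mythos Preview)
Your proposal is correct and follows essentially the same approach as the paper: the paper's argument is just the two-line observation that removing abstainers preserves single-peakedness and that defining $l_i$, $r_i$ as the smallest and largest accepted (non-abstainer) voter makes the interval-catch property ``easy to check.'' You have simply written out what that easy check is --- the monotonicity of $\succ_i$ away from $i$ forces $\Acc(i)\cup\{i\}$ to be an interval --- and handled the abstainer bookkeeping explicitly, which the paper dispatches by restricting to $\VoterSet\setminus\AbstSet$ up front.
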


Indeed, note first that if we remove abstainers from profile $P$, the remaining profile is still single-peaked. Then by defining $l_i$ (resp. $r_i$) the smallest (resp. largest) voter that $i$ accepts as guru, it is easy to check that $\GPWA$ is an interval catch digraph.

Considering the preference profile from Example~\ref{ex:exampleSinglePeaked}, the digraph $\GPWA$ is clearly the interval catch digraph defined by the values $l_1 = 1$, $r_1 = 2$, $l_2= 2$, $r_2=4$, $l_3=1$, $r_3=3$, $l_4=3$ and $r_4=4$.

From the equivalence stated in Theorem~\ref{th:equivGurusKernel}, deciding the existence of an equilibrium is equivalent to deciding the existence of a kernel in the delegation-acceptability digraph. 
It was proven by Prisner \cite{prisner1994algorithms} that a kernel in an interval catch digraph always exists and is computable in $O(n^2)$ time. This leads to a polynomial algorithm for computing an equilibrium for an SP preference profile.  

\begin{theorem}\label{theo:stableSP}
An SP preference profile always admits an equilibrium. Furthermore, an equilibrium can be computed in $O(n^2)$.
\end{theorem}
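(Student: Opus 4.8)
\emph{Proof idea.} The plan is to derive Theorem~\ref{theo:stableSP} from the three ingredients already assembled, without any new combinatorial argument. By Theorem~\ref{th:equivGurusKernel}, a subset $K\subseteq\VoterSet\setminus\AbstSet$ is the guru-set of some equilibrium if and only if $K$ is a kernel of the delegation-acceptability digraph $\GPWA$; in particular an equilibrium exists as soon as $\GPWA$ has a kernel, and once a kernel $K$ is known an explicit equilibrium $d$ with $\Gurus(d)=K$ can be written down directly, as noted right after Theorem~\ref{th:equivGurusKernel} (every voter in $K$ votes and every other voter delegates to her preferred acceptable guru in $K$, which exists because $K$ is absorbing). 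So the whole statement reduces to: \emph{$\GPWA$ admits a kernel, and one can be computed in $O(n^2)$ time.}

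First I would invoke Proposition~\ref{prop:SPintervalcatch}: since $P$ is single-peaked, $\GPWA$ is an interval catch digraph on $\VoterSet\setminus\AbstSet$, the defining parameters being $l_i$ and $r_i$, the smallest and the largest voter (among non-abstainers) that $i$ accepts as a guru. From the raw preference profile these $2n$ parameters are read off by one scan of each voter's preference list, hence in $O(n^2)$ total time, which is also the order of the input size. Then I would apply Prisner's theorem~\cite{prisner1994algorithms}, recalled above: every interval catch digraph has a kernel, computable in $O(n^2)$ time. Feeding the returned kernel $K$ into the reconstruction above yields an equilibrium $d$ with $\Gurus(d)=K$; this last step costs $O(n^2)$ as well (for each of the at most $n$ non-gurus one searches her favourite guru in $K$). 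Summing the three phases gives the claimed $O(n^2)$ bound.

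The only non-elementary ingredient is Prisner's kernel existence/computation result, so that is where the difficulty lies if one wants a self-contained proof, and there I would sketch a direct construction. Process the non-abstainers in increasing order, keeping track of the rightmost vertex $p$ already placed in $K$; on reaching a vertex $i$ with $K\cap\{l_i,\dots,r_i\}=\emptyset$ (i.e.\ not yet absorbed), add to $K$ the rightmost vertex $k$ with $l_i\le k\le r_i$ that is non-adjacent in $\GPWA$ to every current element of $K$ --- the heuristic being to push each new kernel element as far right as possible while preserving independence. One then has to verify, using the nesting structure of the catch intervals implied by $l_i\le i\le r_i$, that the output set is independent, that it absorbs every vertex, and that the scan runs in $O(n^2)$. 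I expect the correctness bookkeeping here --- in particular handling a vertex whose interval $\{l_i,\dots,r_i\}$ straddles a previously chosen kernel element on both sides, where the greedy choice must be shown never to be fatal --- to be the delicate point; citing Prisner's theorem bypasses it entirely, and that is the route I would take in the paper.
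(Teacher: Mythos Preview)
Your proposal is correct and follows exactly the paper's own argument: reduce equilibrium existence to kernel existence in $\GPWA$ via Theorem~\ref{th:equivGurusKernel}, observe via Proposition~\ref{prop:SPintervalcatch} that $\GPWA$ is an interval catch digraph, and invoke Prisner's $O(n^2)$ kernel result. The paper does not attempt a self-contained proof of Prisner's theorem either, so your extra greedy sketch goes beyond what the paper provides.
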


\subsection{Equilibria and Optimization}

Theorem~\ref{theo:stableSP} addresses the question of computing one equilibrium. We now provide an additional characterization of sets of gurus of equilibria, 
that will be a convenient tool for solving other decision or optimization problems on equilibria. 

Let us define an auxiliary digraph $\Gaux= (\Vaux, \Aaux)$ associated with $\GPWA$ as follows. The vertex-set $\Vaux$ contains the set of voters $\{1, \dots, n\}$, plus a source $s$ and a sink $t$. 
For $i < j$, the arc-set $\Aaux$ contains the arc $(i,j)$  if the pair $\{i,j\}$ is a kernel of the subgraph of $\GPWA$ induced by $\{i, \dots,j \}$.
It contains the arc $(s,j)$ (resp. the arc $(i,t)$) if the singleton $\{j\}$ (resp. $\{i\}$) is a kernel of the subgraph of $\GPWA$ induced by $\{1, \dots, j\}$ (resp. $\{i, \dots, n \}$).

For illustration purposes, the auxiliary digraph of the preference profile from Ex.~\ref{ex:exampleSinglePeaked} is given in Figure~\ref{fig:Gaux}. The two successors of source $s$ are $1$ and $2$: indeed the singletons $\{1\}$ and $\{2\}$ are kernels of the subgraph induced by $\{1\}$ and $\{1,2\}$ respectively. Vertices 3 or 4 do not absorb vertex 1, hence they are not successors of $s$. Between two vertices of $\{1, \dots, 4\}$ the only arc in $\Gaux$ is $(1,4)$, because all other pairs of vertices are neighbors, while $\{1,4\}$ is a kernel of $\GPWA$.

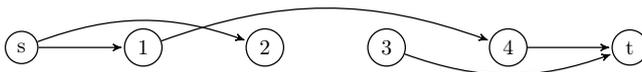
\begin{figure}[!h] 
   \centering
    \scalebox{0.8}{\begin{tikzpicture}[->,>=stealth',shorten >=1pt,auto,node distance=3cm,semithick]
  \node[circle,draw,text=black] (s)   at (-2,0)                 {s};
  \node[circle,draw,text=black] (1)   at (0,0)                 {1};
  \node[circle,draw,text=black] (2)   at (2,0)                 {2};
  \node[circle,draw,text=black] (3)   at (4,0)                 {3};
  \node[circle,draw,text=black] (4)   at (6,0)                 {4};
  \node[circle,draw,text=black] (t)   at (8,0)                 {t};
  
  \path (s) edge[->]  node {} (1)
        (s) edge [->, bend left=20]  node {} (2)
        (1) edge [->, bend left=20]  node {} (4)
	    (3) edge[->, bend right=20]  node {} (t)
  	    (4) edge[->]  node {} (t);
	  \end{tikzpicture}}
    \caption{Auxiliary digraph $\Gaux$ of $\GPWA$ for $P$ the preference profile of Example~\ref{ex:exampleSinglePeaked}.}
    \label{fig:Gaux}
\end{figure}

The importance of the auxiliary digraph is given by the following proposition.

\begin{pro}
\label{prop:corrKernelPath}
There is a one-to-one correspondence between sets of gurus of equilibria for preference profile $P$, and $s-t$ paths in the auxiliary digraph of $\GPWA$.
\end{pro}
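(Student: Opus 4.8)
The plan is to prove the correspondence by showing both directions, using the structure of kernels in interval catch digraphs. The key structural fact is that in an SP (hence interval catch) digraph $\GPWA$, if $K = \{k_1 < k_2 < \dots < k_m\}$ is a kernel, then the voters between two consecutive gurus $k_p$ and $k_{p+1}$ must each be absorbed by one of them (independence forbids any voter strictly between them from being absorbed by a guru outside $\{k_p, k_{p+1}\}$, since out-neighborhoods are intervals containing the vertex; and absorbingness forces them to be absorbed somewhere). More precisely, I would first establish the following lemma: for a kernel $K$ of $\GPWA$ and consecutive gurus $k_p < k_{p+1}$ in $K$, the pair $\{k_p, k_{p+1}\}$ is a kernel of the subgraph of $\GPWA$ induced by $\{k_p, k_p+1, \dots, k_{p+1}\}$; similarly $\{k_1\}$ is a kernel of the subgraph induced by $\{1, \dots, k_1\}$ and $\{k_m\}$ is a kernel of the subgraph induced by $\{k_m, \dots, n\}$. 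This uses the interval-catch property: a vertex $i$ with $k_p < i < k_{p+1}$ has out-neighborhood $\{l_i, \dots, r_i\}\setminus\{i\}$, and since $K$ is absorbing, $i$ is absorbed by some $k \in K$; since the out-neighborhood is an interval containing $i$, that $k$ must be $k_p$ or $k_{p+1}$ (any other guru is ``blocked'' by $k_p$ or $k_{p+1}$ lying between $i$ and it). Independence of $K$ restricted to this window is immediate.

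Given that lemma, the map from a set of gurus $K = \{k_1 < \dots < k_m\}$ of an equilibrium to the $s$-$t$ path $s \to k_1 \to k_2 \to \dots \to k_m \to t$ in $\Gaux$ is well-defined: each arc is present by definition of $\Aaux$ together with the lemma. This map is clearly injective since the path determines $K$ exactly as its set of internal vertices. For surjectivity, I would take any $s$-$t$ path $s \to k_1 \to \dots \to k_m \to t$ in $\Gaux$ (note every arc out of $s$ goes to some vertex, every arc into $t$ comes from some vertex, and arcs among $\{1,\dots,n\}$ always go from smaller to larger index, so the internal vertices are automatically sorted $k_1 < \dots < k_m$) and show $K = \{k_1, \dots, k_m\}$ is a kernel of $\GPWA$. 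Independence: an arc inside $K$ would have to join some $k_p$ to some $k_q$ with $p < q$; but the arc $(k_p, k_{p+1}) \in \Aaux$ says $\{k_p, k_{p+1}\}$ is independent in the induced subgraph, so there is no arc between consecutive gurus, and the interval-catch property propagates this (if $k_p$ had an arc to $k_q$ with $q > p+1$, then since the out-neighborhood of $k_p$ is an interval, $k_p$ would also have an arc to $k_{p+1}$, contradiction — and symmetrically for arcs the other direction). Absorbingness: any vertex $i \notin K$ lies either before $k_1$, between some $k_p$ and $k_{p+1}$, or after $k_m$; in each case the corresponding arc of $\Aaux$ (namely $(s,k_1)$, $(k_p,k_{p+1})$, or $(k_m,t)$) guarantees $i$ is absorbed within that window by a vertex of $K$. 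Finally, Theorem~\ref{th:equivGurusKernel} converts ``kernel of $\GPWA$'' to ``set of gurus of an equilibrium'', closing the correspondence.

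I expect the main obstacle to be the careful handling of the interval-catch structure when arguing that a kernel of $\GPWA$ decomposes into ``windowed'' kernels on consecutive pairs, and conversely that windowed independence/absorbingness glue into global independence/absorbingness. The crux is the claim that a vertex $i$ strictly between consecutive gurus cannot be absorbed by a guru outside the enclosing pair, and that an arc of $\GPWA$ from $k_p$ to a non-adjacent $k_q$ would force an arc to the adjacent $k_{p+1}$; both rely on out-neighborhoods being intervals around the vertex, so I would state and use the interval-catch property of $\GPWA$ (Proposition~\ref{prop:SPintervalcatch}) explicitly at these points. Once the decomposition lemma is in hand, the bijection itself — that internal vertices of the path are exactly $K$, and that the sorting is forced by the orientation of arcs in $\Gaux$ — is routine. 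A minor point to address is the degenerate case $m = 1$ (a single guru), where the path is $s \to k_1 \to t$ and one checks directly that the two defining conditions for $(s,k_1)$ and $(k_1,t)$ together say exactly that $\{k_1\}$ is a kernel of all of $\GPWA$; and the case $m=0$, i.e.\ an empty set of gurus, which corresponds to the direct arc $(s,t)$ being present exactly when $\GPWA$ has no vertices (equivalently every voter abstains), handled as a boundary case.
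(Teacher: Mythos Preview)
Your proposal is correct and follows essentially the same approach as the paper. The paper isolates your decomposition argument as a standalone if-and-only-if lemma on interval catch digraphs (a set $K$ is a kernel iff each ``window'' $K\cap I_t$ is a kernel of the induced subgraph $G[I_t]$), proved via exactly the interval arguments you describe, and then observes that this lemma together with the definition of $\Gaux$ and Theorem~\ref{th:equivGurusKernel} immediately gives the bijection; your organization into separate forward and backward maps is a cosmetic difference only.
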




The proof of Proposition~\ref{prop:corrKernelPath} relies on a technical lemma on kernels of interval catch digraphs, which is stated and proven in Appendix. We obtain a one-to-one correspondence between sets of gurus of equilibria, kernels of $\GPWA$, and $s-t$ paths of the auxiliary digraph. 
Using this result, it is possible to solve problems \textbf{MEMB}, \textbf{MINDIS}, \textbf{MINMAXVP} and \textbf{MINABST} by transforming them into path problems in the auxiliary digraph. The results we obtain are given in the following theorem.


\begin{theorem}
\label{thrm:auxdigraph}
Given an SP preference profile $P$: the auxiliary digraph of $\GPWA$ is computable in $O(n^2)$ time; problem \textbf{MEMB} is solvable in $O(n^2)$ time; problems \textbf{MINDIS}, \textbf{MINMAXVP} and \textbf{MINABST} are solvable in $O(n^3)$ time.
\end{theorem}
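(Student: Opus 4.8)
The plan is to establish the time bounds one by one, starting with the construction of $\Gaux$ and then reducing each of \textbf{MEMB}, \textbf{MINDIS}, \textbf{MINMAXVP}, \textbf{MINABST} to a path problem in $\Gaux$ via the one-to-one correspondence of Proposition~\ref{prop:corrKernelPath}. First I would build $\GPWA$, which from Proposition~\ref{prop:SPintervalcatch} amounts to computing, for each voter $i$, the interval endpoints $l_i \le i \le r_i$ of its out-neighbourhood; this is $O(n^2)$ by reading off $\Acc(i)$ from each preference list. For the auxiliary digraph I need to decide, for every pair $i<j$, whether $\{i,j\}$ is a kernel of the subgraph of $\GPWA$ induced by $\{i,\dots,j\}$, and similarly for the singleton arcs $(s,j)$ and $(i,t)$. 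The key observation to get $O(n^2)$ overall (rather than $O(n^3)$ naively) is that these conditions can be expressed in terms of the $l_i, r_i$ values: $\{i,j\}$ is independent iff $j \notin \{l_i,\dots,r_i\}$ and $i \notin \{l_j,\dots,r_j\}$ (i.e. $r_i < j$ and $l_j > i$), and it is absorbing in $\{i,\dots,j\}$ iff every $k$ with $i<k<j$ satisfies $i \in \{l_k,\dots,r_k\}$ or $j \in \{l_k,\dots,r_k\}$, equivalently $l_k \le i$ or $r_k \ge j$. For fixed $i$, as $j$ increases the set of ``bad'' intermediate vertices $k$ (those with $l_k > i$ and $r_k < j$) only grows, so a single left-to-right sweep maintaining $\min\{r_k : i<k, l_k>i\}$ lets me determine all valid $j$ in $O(n)$ time per $i$; the singleton arcs are handled the same way with one endpoint fixed at $1$ or $n$. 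This gives the $O(n^2)$ bound for constructing $\Gaux$, which has $O(n^2)$ arcs.

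For \textbf{MEMB} with distinguished voter $i$: by Proposition~\ref{prop:corrKernelPath}, an equilibrium with $i$ as a guru exists iff some $s$--$t$ path in $\Gaux$ passes through vertex $i$, which holds iff $i$ is reachable from $s$ and $t$ is reachable from $i$. Both reachability queries are a single BFS/DFS in the $O(n^2)$-arc digraph, hence $O(n^2)$ total (dominated by building $\Gaux$).

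For the three optimization problems I would put suitable weights on the arcs of $\Gaux$ so that the weight of an $s$--$t$ path equals the objective value of the corresponding equilibrium, then run a shortest/bottleneck path computation. The subtlety is that the objective of an equilibrium is a sum (for \textbf{MINDIS} and \textbf{MINABST}) or a max (for \textbf{MINMAXVP}) of per-voter contributions that depend on which guru each non-guru voter is assigned to; but once the kernel $K$ is fixed, the canonical equilibrium assigns each non-guru to its most preferred guru in $K$, and since the kernel corresponding to a path is exactly the set of internal vertices of that path (two consecutive gurus $i<j$ on the path being ``adjacent'' in $K$ with nobody else between them in the induced interval structure), every voter strictly between two consecutive path vertices $i$ and $j$ is absorbed only by $i$ or $j$. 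Hence the contribution of the block of voters lying between consecutive path vertices $i$ and $j$ is a function of the pair $(i,j)$ alone, and I can precompute it and attach it to the arc $(i,j)$ — likewise for the blocks $\{1,\dots\}$ before the first guru and $\{\dots,n\}$ after the last, attached to the $(s,\cdot)$ and $(\cdot,t)$ arcs. For \textbf{MINDIS} the arc weight is the sum over voters in the block of $(\Score(\cdot,d)-1)$; for \textbf{MINABST} it is the number of voters in the block that abstain (those accepting neither endpoint, i.e. whose interval misses both $i$ and $j$); then a shortest $s$--$t$ path gives the optimum. For \textbf{MINMAXVP} the arc weight is the maximum voting power incurred within the block (each guru's weight being $1$ plus the number of block-voters assigned to it), and I take a bottleneck (min–max) shortest path. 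Computing each arc weight is $O(n)$ by scanning the block, so filling in all $O(n^2)$ arc weights costs $O(n^3)$, and the path computation on $O(n^2)$ arcs is $O(n^2)$; hence $O(n^3)$ overall for each of the three problems.

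The main obstacle I anticipate is the bookkeeping in the optimization part: verifying cleanly that along an $s$--$t$ path the blocks of non-guru voters between consecutive path vertices are exactly partitioned, that each such voter is genuinely absorbed only by one of the two bracketing path vertices (which is where the interval-catch structure and the technical lemma behind Proposition~\ref{prop:corrKernelPath} are needed — a voter $k$ with $i<k<j$ must accept $i$ or $j$, and by single-peakedness it prefers whichever of $i,j$ is closer), and that the canonical equilibrium attached to a kernel is the relevant one for each objective (for \textbf{MINDIS}, \textbf{MINABST} the choice of equilibrium with a given kernel is irrelevant since a non-guru's guru in any equilibrium with kernel $K$ is forced to be its favourite in $K$ on its side, while for \textbf{MINMAXVP} one should check no better equilibrium with the same kernel exists). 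The $O(n^2)$ construction of $\Gaux$ also requires care to avoid a hidden $O(n^3)$; the monotone-sweep argument above is what keeps it quadratic. Everything else — the reachability for \textbf{MEMB} and the shortest/bottleneck path computations — is standard.
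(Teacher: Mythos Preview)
Your plan matches the paper's approach almost exactly for the construction of $\Gaux$, for \textbf{MEMB}, and for \textbf{MINDIS} and \textbf{MINABST}: the same left-to-right sweep per $i$ to get the out-neighbourhood in $O(n)$, the same reachability argument, and the same ``put the block contribution on the arc $(i,j)$ and take a shortest $s$--$t$ path'' reduction. The appeal to Observation~\ref{obs:affectGuruSP} (that a non-guru between consecutive gurus $i<j$ chooses among $\{i,j,0\}$) is precisely what the paper uses to justify the block decomposition.

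There is, however, a genuine gap in your treatment of \textbf{MINMAXVP}. You propose to put on each arc $(i,j)$ ``the maximum voting power incurred within the block'' and then compute a bottleneck path. This does not work: the voting power of a guru $j$ lying strictly inside the path is \emph{not} determined by a single arc. Guru $j$ collects voters from the block $\{i+1,\dots,j-1\}$ on its left (those preferring $j$ to $i$) \emph{and} from the block $\{j+1,\dots,k-1\}$ on its right (those preferring $j$ to $k$), where $(i,j)$ and $(j,k)$ are the two consecutive arcs of the path through $j$. Hence $\VotingPower(j,d)=1+w^j_{ij}+w^j_{jk}$ depends on the pair of adjacent arcs, and a min--max over individual arc weights cannot recover it. The paper fixes this by splitting the count into a left part $w^j_{ij}$ and a right part $w^j_{jk}$ and running a dynamic program whose state is the current rightmost guru together with its accumulated left contribution; the objective $\max_j (1+w^j_{ij}+w^j_{jk})$ is then minimized over all $s$--$t$ paths in $O(n^3)$. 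Your sketch would need to be amended along these lines (or, equivalently, by moving to a bottleneck path in the line graph of $\Gaux$, which also costs $O(n^3)$).
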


\begin{proof} (sketch) For problem \textbf{MINABST} we sketch the proof of the equivalence with a path problem in $\Gaux$ (full proof of the theorem is deferred to the Appendix).
Given $d$ an equilibrium, with Proposition~\ref{prop:corrKernelPath} the set $K= \Gurus(d)$ forms an $s$-$t$ path in $\Gaux$. We claim that to count the number of voters who abstain in $d$, it is sufficient to sum, for every pair of successive gurus $k$, $k'$, the number $a_{k,k'}$ of voters between $k$ and $k'$ who prefer abstention over $k$ and $k'$. Indeed because preferences are SP, any non-guru delegates to the closest guru on her left, or the closest guru on her right, or abstains. Thus $\Gaux$ can be labeled on arcs with the values $a_{i,j}$, and \textbf{MINABST} can be solved by finding a shortest $s$-$t$ path in $\Gaux$. 
\end{proof}

\subsection{Convergence of Dynamics}

As Theorem~\ref{theo:stableSP} asserts that an equilibrium always exists in the SP case, it is worth considering convergence of dynamics in this setting. Unfortunately, such a convergence is not guaranteed. Indeed, Escoffier et al. \cite{escoffier2018LD} provide a BRD permutation dynamics
that does not converge for the preference profile of Example~\ref{ex:exampleSinglePeaked}. As this profile is SP, convergence of BRDs are not guaranteed for SP preferences. 

\section{Symmetrical Preference Profiles}\label{sec:symmetrical}

\subsection{Definition, Existence of Equilibria and Membership Problem}
We consider in this section the case where the preferences are symmetrical in the sense that $i\in \Acc(j)$ if and only if $j\in \Acc(i)$. As we will see in Section~\ref{sec:dbsn}, this is a particular case of the more general distance-based preference profiles. In the case of symmetrical preference profiles, the delegation-acceptability digraph has the arc $(i,j)$ iff it has the arc $(j,i)$ (it is symmetrical). Then, as noted in~\cite{escoffier2018LD}, the existence of an equilibrium is trivially guaranteed (take any maximal independent set of $\GPWA$), and for any non-abstainer $i$ there exists an equilibrium in which $i$ is a guru (take a maximal independent set containing $i$). 

\subsection{Equilibria and Optimization}

Though the existence of equilibrium is trivial in the case of symmetrical preference profiles, we now show that {\bf MINDIS}, {\bf MINMAXVP} and {\bf MINABST} are computationally hard, in contrast with the results of the SP case. These results, as well as another hardness result in Section~\ref{subsec:dbsnmembership}, are all based on a reduction from the \textbf{3-Satisfiability} (3-SAT) problem, known to be NP-complete~\cite{gj}, and use the same gadget digraph that we present now.\\


\noindent\fbox{\parbox{12cm}{
\textbf{3-SATISFIABILITY (abbreviated by 3-SAT)}\\
\emph{INSTANCE:} A set $U$ of $n_u$ binary variables, a collection $C$ of $n_c$ disjunctive clauses of 3 literals, where a literal is a variable or a negated variable in $U$.\\
\emph{QUESTION}: Is there a truth assignment for $U$ that satisfies all clauses in $C$?
}}

\vspace{0.5cm}

To an instance $(U,C)$ of 3-SAT we associate the symmetrical digraph $G_{U,C}$ defined as follows:
\begin{itemize}
	\item For each variable $x_i \in U$, we create two adjacent vertices $v^x_{it}$ and $v^x_{if}$, called variables vertices, representing respectively the literals $x_i$ and $\overline{x_i}$. 
    \item For each clause $c_j \in C$ we create one vertex $v^c_{j}$, called clause vertex; $v^c_{j}$ is adjacent to the three vertices corresponding to the three literals in $c_j$. 
\end{itemize}

To illustrate this construction, consider the following 3-SAT instance: 
\begin{align}
U &= \{x_1,x_2,x_3,x_4,x_5\} \label{form1a}\\
C &= \{ (x_1 \lor x_2 \lor \lnot x_3), (\lnot x_2 \lor \lnot x_4\lor x_1), (\lnot x_1 \lor x_3\lor x_5)\} \label{form1b}
\end{align}
Figure~\ref{graphguc} gives the corresponding digraph $G_{U,C}$.

\begin{figure}[!h] 
   \centering
    \scalebox{0.8}{\begin{tikzpicture}[<->,auto,node distance=3cm,semithick]
  \node[circle,draw,text=black] (AT)   at (-4,1.5)                 {$v^x_{1t}$};
  \node[circle,draw,text=black] (BT)   at (-2,1.5)                 {$v^x_{2t}$};
  \node[circle,draw,text=black] (CT)   at (0,1.5)                 {$v^x_{3t}$};
  \node[circle,draw,text=black] (DT)   at (2,1.5)                {$v^x_{4t}$};
  \node[circle,draw,text=black] (ET)   at (4,1.5)                {$v^x_{5t}$};
  \node[circle,draw,text=black] (AF)   at (-4,3)                 {$v^x_{1f}$};
  \node[circle,draw,text=black] (BF)   at (-2,3)                 {$v^x_{2f}$};
  \node[circle,draw,text=black] (CF)   at (0, 3)                 {$v^x_{3f}$};
  \node[circle,draw,text=black] (DF)   at (2, 3)                {$v^x_{4f}$};
  \node[circle,draw,text=black] (EF)   at (4, 3)                {$v^x_{5f}$};
  \node[circle,draw,text=black] (C1)   at (-2,0)                 {$v^c_1$};
  \node[circle,draw,text=black] (C2)   at (0,0)                 {$v^c_2$};
  \node[circle,draw,text=black] (C3)   at (2,0)                 {$v^c_3$};
    
  \path
  	    (C1) edge  node {} (AT)
        (C1) edge  node {} (BT)
        (C1) edge  node {} (CF)

  	    (C2) edge  node {} (BF)
        (C2) edge  node {} (DF)
        (C2) edge  node {} (AT)

  	    (C3) edge  node {} (AF)
        (C3) edge  node {} (CT)
        (C3) edge  node {} (ET)

        (AT) edge node {} (AF)
        (BT) edge node {} (BF)
        (CT) edge node {} (CF)
        (DT) edge node {} (DF)
        (ET) edge node {} (EF);
	  \end{tikzpicture}}
    \caption{Gadget digraph $G_{U,C}$}
    \label{graphguc}
\end{figure}
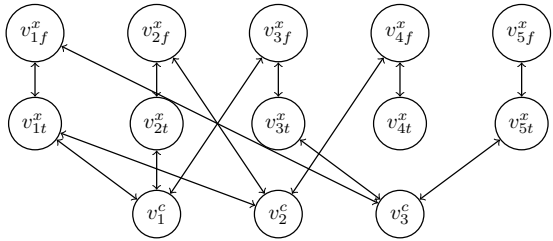

\begin{obs}\label{obsguc}
$G_{U,C}$ has a kernel containing no clause vertex if and only if $(U,C)$ is satisfiable.
\end{obs}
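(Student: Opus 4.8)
The plan is to prove both directions by giving explicit constructions. First I would establish some structural facts about $G_{U,C}$: the only arcs are (i) between the two variable vertices $v^x_{it}, v^x_{if}$ of a common variable $x_i$, and (ii) between a clause vertex $v^c_j$ and each of the (at most three) literal vertices occurring in $c_j$. In particular there are no arcs among clause vertices, and no arcs between variable vertices of distinct variables. So an independent set is constrained only by these two kinds of adjacencies.

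For the direction ``satisfiable $\Rightarrow$ kernel with no clause vertex'': given a satisfying assignment $\tau$, I would let $K$ consist, for each variable $x_i$, of the vertex $v^x_{it}$ if $\tau(x_i)=\text{true}$ and $v^x_{if}$ otherwise. This set $K$ contains exactly one of the two vertices of each variable gadget, so it is independent (the only possible conflict is within a variable gadget, and we picked only one vertex there; no arcs exist between different variable gadgets). To see $K$ is absorbing: a vertex not in $K$ is either the ``other'' literal vertex $v^x_{i,\neg\tau}$ of some variable --- which is absorbed by $v^x_{i,\tau}\in K$ via the variable-gadget arc --- or a clause vertex $v^c_j$; since $\tau$ satisfies $c_j$, some literal $\ell$ of $c_j$ is true under $\tau$, so the corresponding vertex is in $K$ and is adjacent to $v^c_j$, absorbing it. Hence $K$ is a kernel containing no clause vertex.

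For the converse ``kernel with no clause vertex $\Rightarrow$ satisfiable'': let $K$ be such a kernel. For each variable $x_i$, since $v^c$-vertices are excluded, $K\cap\{v^x_{it},v^x_{if}\}$ must be absorbing for whichever of these two vertices is not in $K$; the only vertex that can absorb $v^x_{it}$ (other than a clause vertex) is $v^x_{if}$ and vice versa, so $K$ contains \emph{at least} one of $v^x_{it},v^x_{if}$, and by independence it contains \emph{exactly} one. Define $\tau(x_i)=\text{true}$ iff $v^x_{it}\in K$. For any clause $c_j$, the vertex $v^c_j\notin K$ must be absorbed by some vertex of $K$; its only out-neighbours are the three literal vertices of $c_j$, so one of those literal vertices lies in $K$, which means the corresponding literal is true under $\tau$, so $c_j$ is satisfied. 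Thus $\tau$ satisfies $C$.

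The construction is essentially routine; the only point requiring care --- and the one I would state carefully --- is the ``at least one of each variable pair is in $K$'' step, which crucially uses that clause vertices are barred from $K$ (otherwise a variable gadget could be left empty, with both its vertices absorbed by clause vertices). Everything else is a direct check of independence and absorption against the explicit arc set of $G_{U,C}$.
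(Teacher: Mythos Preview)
Your proposal is correct and follows essentially the same approach as the paper: take the set of true-literal vertices as the kernel in one direction, and in the other direction observe that a kernel avoiding all clause vertices must pick exactly one vertex per variable pair, yielding a satisfying assignment. Your write-up is simply more explicit than the paper's terse version, and in particular your careful justification of the ``exactly one per variable pair'' step (using that clause vertices are excluded) fills in a detail the paper leaves implicit.
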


From this construction we derive the following hardness results.

\begin{theorem}\label{th:hardnesssymmetric}
Given a symmetrical preference profile $P$:\\ 
-- it is NP-hard to decide whether there exists an equilibrium where no voter abstains, or not. Thus, in particular, {\bf MINABST} is NP-hard.\\
-- {\bf MINDIS} is NP-hard  even if there are no abstainers.\\
-- {\bf MINMAXVP} is NP-hard  even if there are no abstainers.
\end{theorem}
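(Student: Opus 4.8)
The plan is to reduce from \textbf{3-SAT} in all three cases, exploiting two facts already at our disposal. First, for a symmetric preference profile $\GPWA$ is symmetric, so by Theorem~\ref{th:equivGurusKernel} the sets of gurus of equilibria are exactly the kernels of $\GPWA$, which in the symmetric case are precisely the maximal independent sets of $\GPWA$. Second, by Observation~\ref{obsguc}, the maximal independent sets of $G_{U,C}$ that contain no clause vertex are exactly the satisfying assignments of $(U,C)$: each variable pair $\{v^x_{it},v^x_{if}\}$ contributes exactly one vertex, so such a set encodes a truth assignment, and the absence of clause vertices forces every clause vertex to be dominated by one of its literal vertices, i.e.\ every clause to be satisfied. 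For the ``no abstention'' statement (and hence for \textbf{MINABST}) I would take the profile whose acceptability relation is given by $G_{U,C}$ but declare every clause vertex to be an abstainer, keeping the literal vertices as non-abstainers. Then $\GPWA$ is just the perfect matching on the variable pairs, so the sets of gurus of equilibria are exactly the truth assignments of $U$; moreover no literal voter ever abstains (in each pair one voter votes and the other delegates to her), while a clause voter $v^c_j$, being an abstainer, delegates to a literal vertex of $c_j$ if one of them is a guru and abstains otherwise. Hence $v^c_j$ abstains in an equilibrium iff $c_j$ is falsified by the corresponding assignment, so an equilibrium with no abstaining voter exists iff $(U,C)$ is satisfiable; since such an equilibrium has \textbf{MINABST}-value $0$, this also gives NP-hardness of \textbf{MINABST}.

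For \textbf{MINDIS} and \textbf{MINMAXVP} I would keep every voter a non-abstainer, which immediately yields the ``even if there are no abstainers'' strengthening, and work with a digraph obtained from $G_{U,C}$ by adding auxiliary vertices, choosing the preference orders so that two properties hold: (i) the correspondence of Observation~\ref{obsguc} survives, i.e.\ the augmented digraph still has a kernel avoiding all clause vertices iff $(U,C)$ is satisfiable; and (ii) there is a threshold $B$, polynomial in the input, such that an equilibrium whose set of gurus is a clause-vertex-free kernel has objective value at most $B$, whereas any equilibrium in which some clause vertex is a guru has objective value strictly larger than $B$. For \textbf{MINMAXVP}, (ii) would be obtained by arranging that a clause vertex, whenever it is a guru, collects a voting power exceeding $B$; for \textbf{MINDIS}, by arranging that a guru clause vertex together with the voters that are thereby pushed into poor delegations contributes more than $B$ to the total dissatisfaction. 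Given (i) and (ii), the optimum of the problem is at most $B$ iff $(U,C)$ is satisfiable, so both decision versions are NP-hard, hence so are the optimization problems.

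The main obstacle is realising (i) and (ii) simultaneously under the symmetry constraint. The naive way to inflate the objective around a clause vertex $v^c_j$ is to hang many pendant voters on it; but, by symmetry, such a pendant then belongs to \emph{every} kernel in which $v^c_j$ is not a guru, so it dominates $v^c_j$ ``for free'' and the forced appearance of clause vertices in kernels of unsatisfiable instances — i.e.\ Observation~\ref{obsguc} itself — is lost. The augmentation must therefore produce the required blow-up while keeping a clause vertex the only possible dominator of its own neighbourhood in the ``bad'' case; checking that the constructed digraph is indeed the delegation-acceptability digraph of a symmetric profile, that its kernels are exactly the intended ones, and that both inequalities in (ii) hold for a suitable $B$, is where the real work lies. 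Everything else — the 3-SAT reduction itself and the explicit computations of voting powers and dissatisfactions in the two regimes — is routine bookkeeping.
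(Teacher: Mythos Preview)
Your treatment of the first item is essentially the paper's own argument: declare the clause voters to be abstainers so that $\GPWA$ collapses to the perfect matching on the literal pairs, whence every equilibrium has as gurus exactly one literal per variable, and a clause voter abstains iff none of her three literal neighbours is a guru, i.e., iff her clause is falsified. This part is fine.

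For \textbf{MINDIS} and \textbf{MINMAXVP}, however, your proposal stops precisely where the work begins. You state the target properties (i)--(ii), you correctly diagnose why hanging pendants on individual clause vertices destroys Observation~\ref{obsguc}, and then you say that designing an augmentation meeting (i)--(ii) ``is where the real work lies'' --- without doing it. That is the gap. The paper's resolution is not a local gadget per clause vertex but a \emph{global} one: a clique of auxiliary voters is attached to \emph{all} clause vertices at once.

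For \textbf{MINMAXVP}, the clique has $n_c+2$ vertices $v_1,\dots,v_{n_c+2}$, each carrying a private pendant $v'_i$; every $v_i$ ranks all clause voters above all clique voters. If any clause vertex is a guru it absorbs the entire clique (voting power $\ge n_c+3$); if instead some $v_i$ is a guru it absorbs the rest of the clique plus its pendant (again $\ge n_c+3$). The only way to stay below $n_c+3$ is that every $v'_i$ is a guru absorbing its $v_i$, which forces each clause vertex to be dominated by a literal vertex --- recovering Observation~\ref{obsguc}. For \textbf{MINDIS}, the clique has size $k=\Theta(n_u n_c)$ with a distinguished vertex $v^*$ ranked first by all others and a Latin-square arrangement among $v^*_1,\dots,v^*_{k-1}$ below it; any guru in the clique other than $v^*$, or any clause-vertex guru, then forces $\Theta(k^2)$ total dissatisfaction from the clique alone, whereas the kernel $\{v^*\}\cup\{\text{true literals}\}$ costs $O(k)$. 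Both constructions rest on the same idea --- route all the penalty mass through a \emph{shared} structure adjacent to every clause vertex, so that escaping the penalty forces the clause vertices to be dominated by literals --- and this idea is absent from your sketch.
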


\begin{proof}
	We only prove the first item (see the Appendix for the two other items), which directly follows from Observation~\ref{obsguc}. Let us consider a 3-SAT instance with a set $U$ of variables and a set $C$ of clauses. We create a preference profile with $2n_u$ voters  $v^x_{it}$ and $v^x_{if}$, $i=1,\ldots,n_u$, and $n_c$ voters $v^c_j,j=1,\dots,n_c$. A voter $v^c_j$ accepts to delegate to the 3 voters corresponding to the three literals in the clause (and they accept her by symmetry), and then $v^c_j$ prefers to abstain.
  Moreover,  $v^x_{it}$ and $v^x_{if}$ also accept to delegate to each other. Then they prefer to vote. Then an equilibrium where nobody (no voter $v^c_j$) abstains corresponds to a kernel in $G_{U,C}$ with no clause vertex. The result follows from Observation~\ref{obsguc}. 
\end{proof}

\subsection{Convergence of Dynamics}

We now focus on the question of convergence under BRD in the case of symmetrical preference profiles. Interestingly, while there might be cycles in the SP case, we show now that under BRD the convergence is guaranteed under symmetrical profiles, and that this convergence occurs within a small number of steps.

Given a dynamics with token function $T$, let us define \emph{rounds} as follows:
\begin{itemize}
	\item The first round is $[1,r_1]$ where $r_1$ is the smallest $r$ such that each voter receives the token at least once in  $[1,r]$.
    \item The $k^{st}$ round is $[r_{k-1}+1,r_k]$ where $r_{k}$ is the smallest $r$ such that each voter receives the token at least once in  $[r_{k-1}+1,r]$.
\end{itemize}
For instance, in the case of permutation dynamics, we have $r_k=kn$.
\begin{theorem}\label{th:cvbrd}
Given a symmetrical preference profile $P$, a BRD dynamics always converges in at most 3 rounds.
\end{theorem}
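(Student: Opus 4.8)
The goal is to show that under symmetrical preferences, a BRD dynamics stabilizes after at most $3$ rounds. The natural strategy is to track, round by round, the set of voters whose configuration can no longer change. Recall that in a symmetrical profile, $\GPWA$ is an undirected graph, and the equilibria correspond exactly to maximal independent sets (kernels of a symmetric digraph are precisely maximal independent sets). The key observation driving a best response: when voter $i$ gets the token, her best response is to become a guru (i.e. set $d(i)=i$) unless some \emph{current} guru $j$ with $j \in \Acc(i)$ exists, in which case she delegates to her most preferred such guru; if $i$ is an abstainer and no acceptable guru is present she abstains rather than vote (and one has to be a little careful: delegating into a cycle or to a non-guru only hurts, so a best response always sends $i$ either to herself, to a current guru in $\Acc(i)$, or to abstention). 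In particular, after $i$ plays, $i$ is a guru if and only if no neighbor of $i$ in $\GPWA$ is currently a guru.

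\textbf{Round 1: the guru set stabilizes into a maximal independent set by the end of round 1 — almost.} The first step is to prove that after every voter has received the token once (end of round~1), the set of gurus $\Gurus(d_{r_1})$ is an independent set of $\GPWA$. The point is that when $i$ plays and becomes a guru, all her $\GPWA$-neighbors were non-gurus at that moment; and a non-guru neighbor $j$ can only become a guru later by playing again, at which point it would see $i$ (still a guru, since gurus only stop being gurus when they themselves play) and would not choose to vote. So once two neighbors have both played, at most one is a guru, and that status is ``locked'' among played vertices. Hence at the end of round~1 the guru set is independent. The subtlety is that it need not yet be \emph{maximal}: a voter who was non-guru early in round~1 because of a neighbor that later stopped being a guru could legitimately want to become a guru but hasn't had the token since.

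\textbf{Round 2: the guru set becomes a maximal independent set and freezes.} In round~2, consider the first time each voter $i$ plays. If $i$ is not a guru and has no acceptable guru among its neighbors at that moment, $i$'s best response makes it a guru (or, if $i \in \AbstSet$ and... no — actually an abstainer with no acceptable neighbor-guru would abstain; so one must phrase maximality relative to the induced graph on non-abstainers, exactly as in the existence argument). I claim that by the end of round~2, $\Gurus(d_{r_2})$ is a maximal independent set of $\GPWA[\VoterSet\setminus\AbstSet]$, hence an equilibrium's guru set by Theorem~\ref{th:equivGurusKernel}. The argument: independence is preserved as in round~1; and any non-abstainer non-guru $i$ with no guru neighbor, when it plays in round~2, becomes a guru — and stays one, because none of its neighbors will become a guru afterward (they'd see $i$). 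So after round~2 there is no non-abstainer non-guru lacking a guru neighbor, i.e. the guru set is maximal. Since a maximal independent set in the symmetric graph is a kernel of $\GPWA$, it is the guru set of an equilibrium, and crucially no voter will ever change her guru-status in round~3 onward.

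\textbf{Round 3: delegations of non-gurus settle.} Even once the guru set is fixed and maximal, a non-guru $i$ might still be delegating to a suboptimal guru (or voting, or abstaining) because the better guru only recently appeared. But once the guru set $K$ is frozen (from the start of round~3), the first time $i$ plays in round~3 she picks her best response against $K$, which is her optimal choice permanently, since $K$ never changes again. Hence after round~3 everyone is playing a best response against a fixed kernel, i.e. $d$ is an equilibrium and no further changes occur: convergence in at most $3$ rounds. The main obstacle — and the place to be careful in the writeup — is the bookkeeping around abstainers and around the distinction ``guru-status locked'' vs. ``delegation choice locked'': I would isolate a clean lemma stating ``once both endpoints of an edge of $\GPWA$ have played, the guru set restricted to played vertices is independent and guru-status of a played vertex only changes when that vertex plays,'' and another stating ``a vertex that plays when it has no guru-neighbor becomes and remains a guru.'' With those two facts the three-round count is immediate; the risk is an off-by-one in how rounds overlap (a voter's ``first play in round 2'' may come after another voter's ``second play''), so I would argue purely in terms of ``after every voter has played $k$ times'' and check that $r_k$ rounds suffice for $k=1,2,3$ to give, respectively, independence, maximality, and optimal delegations.
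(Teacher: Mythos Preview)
Your proposal is correct and takes essentially the same approach as the paper: both hinge on what you call Lemma~2 (a non-abstainer who plays under BRD and decides to vote remains a guru forever, since by symmetry no neighbor can subsequently choose to vote while she is a guru), from which it follows that the guru set is fixed after Round~2 and delegations settle in Round~3. Your Round~1 independence observation is correct but the paper skips it as unnecessary, proving the key lemma first and then going straight to the Round~2 case analysis.
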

Intuitively, one can show that symmetry implies that when a voter decides to vote she will not change her mind later. Then after two rounds the set of gurus is fixed, and in the third round each non-guru chooses her best guru, leading to a Nash equilibrium (see Appendix).

We now show that convergence is {\it not} guaranteed under better response dynamics, thus providing a notable difference between the two dynamics. This holds even if we start from the delegation $d_0$ where all voters declare intention to vote, as shown by the following example.

\begin{example}\label{ex:nonconvergencedbsn}
Let us consider the case of 4 voters, where $\Acc(1)=\Acc(3)=\{2,4\}$, $\Acc(2)=\Acc(4)=\{1,3\}$. They all prefer to vote than to abstain.

We give the token to $1,2,1,3,2,4,3,1,4\ldots$. Then the following is compatible with better response:
$d_1(1)=2$, $d_2(2)=3$, $d_3(1)=1$, $d_4(3)=4$, $d_5(2)=2$, $d_6(4)=1$, $d_7(3)=3$, $d_8(1)=2$, $d_9(4)=4$. At this point $d_9=d_2$, so this is a cycle. Intuitively, each voter $i$ delegates  to its neighbor $i+1$ (modulo 4); in the following step $i+1$ delegates to $i+2$, then we give the token back to $i$ who is no more happy with her delegation and decides to vote herself.  
\end{example}

\section{Distance-Based Preference Profiles}\label{sec:dbsn}

\subsection{Definition and Existence of Equilibria}

In this section, we restrict our attention to another type of structured preference profiles. We assume that to each pair $(i, j)$ of voters is associated a  distance $dist(i,j)=dist(j,i)\in \mathbb{R}_+$. 
Then, each voter $i$ has its own {\it acceptability threshold} $\At{i} \in \mathbb{R}_+$: she accepts as possible gurus the other voters that are at distance at most $\At{i}$ from her: 
$$\forall j \in \VoterSet\backslash \{i\},\quad j \in \Acc(i) \Leftrightarrow dist(i,j)\leq \At{i}$$
We say that such a preference profile is DB (Distance Based). An example is given in Appendix \ref{app:db} for illustration purposes (see Example \ref{ex:DB}).

Note that DB preference profiles may represent the case where voters are embedded in a metric space, as in spatial models of preferences~\cite{BOGOMOLNAIA200787}. They might be points in $\mathbb{R}^k$ (as in the example in appendix); they may also represent vertices of a given graph, the distance being the shortest path between vertices in the graph.\\ 

Any symmetrical preference profile is DB: indeed, consider the distance defined by $dist(i,j)=1$ if and only if $j\in \Acc(i)$ (or, equivalently for a symmetrical profile, $i\in \Acc(j)$), and $dist(i,j)=2$ otherwise, and set $\At{i}=1$ for any voter $i$. 
From this observation we immediately know that {\bf MINDIS}, {\bf MINMAXVP} and {\bf MINABST} are NP-hard in the case of DB preference profiles.\\

We now show that the existence of equilibrium, which was trivially guaranteed in the symmetrical case, is also guaranteed in this more general case.

\begin{theorem}
If preferences are DB, a preference profile always admits an equilibrium. Furthermore, an equilibrium can be computed in $O(n^2)$.
\end{theorem}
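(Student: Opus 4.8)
The key idea is to exploit a natural ordering of voters induced by the metric, mimicking the structure that made single-peaked profiles tractable. Concretely, I would fix an arbitrary voter $r$ and order all voters by nondecreasing distance to $r$, breaking ties arbitrarily; call this ordering $v_1, v_2, \dots, v_n$ (so $v_1 = r$). The plan is to build an equilibrium greedily along this ordering, in the spirit of the classical kernel-existence argument for graphs with a suitable elimination order, and then argue correctness via Theorem~\ref{th:equivGurusKernel}: it suffices to produce a kernel $K$ of $\GPWA$.

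\textbf{Main construction.} I would process the voters $v_1, \dots, v_n$ in order and maintain a set $K$ of chosen gurus. When processing $v_i$: if $v_i$ is an abstainer, skip it (abstainers are not vertices of $\GPWA$); otherwise, if some already-selected $v_j \in K$ with $j < i$ satisfies $v_j \in \Acc(v_i)$ (equivalently $dist(v_i,v_j)\le \At{v_i}$), then $v_i$ is already absorbed and we do nothing; else, we check whether $v_i$ has an arc to some \emph{later} vertex that is "close" in a way that would force a conflict — more carefully, the cleanest route is: add $v_i$ to $K$ \emph{unless} $v_i$ has an out-neighbor in $\GPWA$ that is already in $K$. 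Since we only ever add $v_i$ when no current member of $K$ absorbs it, independence of $K$ must be checked against vertices added \emph{after} $v_i$ as well. This is exactly where the metric structure is needed, so I would instead run the greedy in the \emph{reverse} order $v_n, v_{n-1}, \dots, v_1$: process voters from farthest to nearest, and add $v_i$ to $K$ iff no vertex already in $K$ is an out-neighbor of $v_i$ in $\GPWA$. Absorption of every non-selected vertex is then immediate by construction; the work is to prove independence.

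\textbf{The crux.} The main obstacle is proving that $K$ is independent, i.e., that we never add $v_i$ while some later-processed (hence smaller-index, i.e.\ closer to $r$) $v_j \in K$ has an arc $(v_j, v_i)$ or $(v_i, v_j)$. Here I would use the triangle-inequality-free but distance-symmetric structure together with the radial ordering: if $v_i$ is added, then at the moment of processing $v_i$ no $v_k \in K$ with $k > i$ is an out-neighbor of $v_i$, i.e.\ $dist(v_i, v_k) > \At{v_i}$ for all such $v_k$; and it cannot be that $dist(v_k, v_i) \le \At{v_k}$ either, because... — and this is the delicate point — we need the selection rule to also preclude the arc $(v_i, v_k)$ from a later-added $v_k$. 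The honest fix is to make the rule symmetric: when processing $v_i$, add it to $K$ iff $v_i$ has \emph{no neighbor} (in- or out-) already in $K$; since the metric gives $dist(v_i,v_k)=dist(v_k,v_i)$, "$v_i$ has an arc to $v_k$ or from $v_k$" is the relation $\min(\At{v_i},\At{v_k}) \ge dist(v_i,v_k)$, which is symmetric, so a simple greedy independent-dominating-set argument on the underlying (possibly non-symmetric-arc but symmetric-adjacency) graph yields that $K$ is independent and absorbing. Hence $K$ is a kernel of $\GPWA$, and by Theorem~\ref{th:equivGurusKernel} an equilibrium with $\Gurus(d)=K$ exists; concretely let each voter in $K$ vote and each other voter delegate to a closest (hence accepted) member of $K$, and verify Nash-stability directly.

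\textbf{Complexity.} Computing all pairwise "adjacency" bits costs $O(n^2)$, sorting by distance to $r$ costs $O(n\log n)$, and the greedy pass costs $O(n^2)$ (each $v_i$ compared against $K$), so an equilibrium is produced in $O(n^2)$ time, matching the claimed bound. I expect the only real subtlety to be phrasing the greedy rule so that adjacency — not arc direction — is what matters, after which independence is routine; the radial ordering by distance to a fixed voter is a convenience for intuition but, as the argument shows, any vertex ordering works once the rule is symmetric, so the essential content is simply that $\GPWA$ restricted to non-abstainers, viewed as a graph where $i\sim j$ iff $dist(i,j)\le\min(\At i,\At j)$, always has a kernel because that relation is symmetric.
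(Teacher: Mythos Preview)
There is a genuine gap. Your final construction---take a maximal independent set in the undirected graph where $i\sim j$ iff there is an arc between $i$ and $j$ in $\GPWA$ in either direction---does \emph{not} in general yield a kernel of $\GPWA$. Independence in $\GPWA$ does hold for such a $K$, but absorption requires that every non-abstainer $u\notin K$ have an \emph{out}-arc into $K$, i.e., some $k\in K$ with $dist(u,k)\le\At{u}$; your rule only guarantees that $u$ has \emph{some} arc (possibly only incoming) to $K$, which is not enough. (Incidentally, ``arc to $v_k$ or from $v_k$'' is the condition $dist(v_i,v_k)\le\max(\At{v_i},\At{v_k})$, not $\min$; if you instead use the $\min$ relation you recover absorption but lose independence in $\GPWA$, so neither symmetrization works.) A concrete counterexample: two non-abstainers with $dist(1,2)=1$, $\At{1}=0.5$, $\At{2}=2$. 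Then $\GPWA$ has the single arc $(2,1)$ and the unique kernel is $\{1\}$. Your reverse-order greedy with $r=1$ processes $2$ first, adds it, then rejects $1$ because of the arc $(2,1)$, returning $K=\{2\}$---but $1$ has no out-arc at all, so $K$ is not absorbing. Your closing claim that ``any vertex ordering works once the rule is symmetric'' is therefore false; the radial ordering by distance to a fixed anchor does not help either.

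The paper's proof uses a different and essential ordering: process non-abstainers by \emph{increasing acceptability threshold} $\At{i}$. Repeatedly pick the remaining voter $i$ with smallest $\At{i}$, put $i$ into $K$, and delete all voters that accept $i$ as a guru. Absorption is immediate by construction. For independence, suppose $i_k,i_l\in K$ and $i_k$ accepts $i_l$; then $i_k$ must have been selected before $i_l$ (else $i_k$ would have been deleted when $i_l$ was selected), so $\At{i_k}\le\At{i_l}$, whence $dist(i_k,i_l)\le\At{i_k}\le\At{i_l}$ means $i_l$ also accepts $i_k$ and should have been deleted when $i_k$ was selected---a contradiction. The ordering by thresholds, not by distance to any anchor, is the idea your argument is missing.
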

\begin{proof}

We give a $O(n^2)$ procedure that builds an equilibrium for any DB preference profile.
Build a set $K$ of voters by using the following recursive procedure. Let $S = \VoterSet \setminus \AbstSet$. Then, while $S$ is not empty, add to $K$ the voter $i$ of $S$ with smallest $\At{i}$ value and remove $i$ from $S$ as well as all voters accepting $i$ as guru. At the end of this process, $K = \{i_1,\ldots, i_m\}$ is a kernel of $\GPWA$. It is absorbing because each voter in $ \VoterSet \setminus K$ has at some point been removed from $S$ because it was absorbed by one element of $K$. It is also independent. Indeed, let us assume by contradiction that $i_k$ accepts to delegate to $i_l$ with $i_k,i_l \in K$. Then, necessarily $i_k$ has been added to $K$ before $i_l$. Otherwise, $i_k$ would have been removed from $S$ at the same time as $i_l$ and would not have been added to $K$. Hence, $\At{i_k} \leq \At{i_l}$ and $i_l$ accepts to delegate to $i_k$ which is not possible by the same argument. This procedure builds $K$ in $O(n^2)$ and the equilibrium induced by $K$ can easily be build in $O(n^2)$. 
\end{proof}

We note that the proof does not rely on the fact that $dist$ is a distance: an equilibrium always exists as soon as $dist(i,j)=dist(j,i)$, even if the triangle inequality does not hold for instance.

\subsection{Membership Problem}\label{subsec:dbsnmembership}

Given that an equilibrium always exist, we now focus on the problem {\bf MEMB}. In the case of symmetrical preferences, any voter could be a guru. We show here a drastic difference in the case of DB preferences, as  {\bf MEMB} becomes NP-hard.

\begin{theorem}\label{th:membharddb}
{\bf MEMB} is NP-hard in the case of DB preference profiles, even if there are no abstainers.
\end{theorem}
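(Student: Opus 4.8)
\textbf{Proof plan for Theorem~\ref{th:membharddb}.}

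The plan is to reduce from 3-SAT, reusing the gadget digraph $G_{U,C}$ together with Observation~\ref{obsguc}, but augmenting it so that the question ``does this fixed voter belong to the set of gurus of some equilibrium?'' encodes exactly ``is $(U,C)$ satisfiable?''. Recall from Observation~\ref{obsguc} that $G_{U,C}$ has a kernel avoiding all clause vertices iff $(U,C)$ is satisfiable, and note that $G_{U,C}$ is a symmetrical digraph, hence realizable as a DB profile by the construction in Section~\ref{sec:dbsn} (put $dist(i,j)=1$ when $\{i,j\}$ is an arc and $2$ otherwise, all thresholds $\At{i}=1$). The issue is that plain symmetrical realizations always admit equilibria with any chosen guru, so we must leave the symmetrical world and genuinely exploit the freedom of distance-based thresholds.

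First I would add a single new ``test'' voter $w$ and design distances and thresholds so that $w$ can be a guru in an equilibrium precisely when the gadget part settles into a kernel containing no clause vertex. Concretely: $w$ accepts, and is accepted by, exactly the clause voters $v^c_j$ (so $w$ is at distance $1$ from every $v^c_j$ and at distance $2$ from everything else, with $\At{w}=1$ and $\rho_{v^c_j}$ enlarged just enough to reach $w$). Then if $w$ is a guru — hence in the kernel $K$ — independence forces every clause voter $v^c_j \notin K$; absorption then forces each $v^c_j$ to be dominated by one of its three literal vertices in $K$, i.e.\ $K\cap(\VoterSet\setminus\{w\})$ restricted to the literal/clause vertices is a kernel of $G_{U,C}$ with no clause vertex, so $(U,C)$ is satisfiable. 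Conversely, given a satisfying kernel $K_0$ of $G_{U,C}$ (no clause vertex), the set $K_0\cup\{w\}$ should be checked to be a kernel of the enlarged digraph: it is independent since $w$ is only adjacent to clause vertices and $K_0$ has none; it is absorbing since every clause vertex is absorbed inside $G_{U,C}$ by $K_0$, and $w$ absorbs nothing new that is not already absorbed. The main technical care is to verify that enlarging $\rho_{v^c_j}$ so that $v^c_j$ reaches $w$ does not create unwanted arcs among clause vertices or from clause vertices to literal vertices they should not see — this is why I keep all non-adjacent pairs at distance $2$ and all relevant thresholds at $1$, only bumping $\rho_{v^c_j}$ to, say, $1$ while placing $w$ at distance exactly $1$; equivalently one can just work directly with the acceptability sets $\Acc(\cdot)$ as the paper allows (it only needs $\Acc$ and the abstainer flags as input), and then realize the symmetric-on-$\{v^c_j,w\}$-and-gadget relation as a DB profile at the very end.

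Actually the cleanest route — and the one I would write up — is to skip distances entirely in the argument and instead specify $\Acc$: let $\Acc(v^x_{it})=\{v^x_{if}\}\cup\{v^c_j : x_i\in c_j\}$ and symmetrically for $v^x_{if}$, let $\Acc(v^c_j)=\{$its three literal vertices$\}\cup\{w\}$, and $\Acc(w)=\{v^c_1,\dots,v^c_{n_c}\}$, with \emph{no} abstainers (every voter prefers to vote over abstaining, and ranks accepted gurus above voting). One checks this whole relation is symmetrical, hence DB by the Section~\ref{sec:dbsn} observation, and that the delegation-acceptability digraph $\GPWA$ is $G_{U,C}$ plus the star from $w$ to the clause vertices. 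By Theorem~\ref{th:equivGurusKernel}, $w$ is a guru of some equilibrium iff $\GPWA$ has a kernel containing $w$, which by the argument above holds iff $G_{U,C}$ has a clause-vertex-free kernel, iff $(U,C)\in$ 3-SAT. Since there are no abstainers and the reduction is clearly polynomial, this proves NP-hardness of \textbf{MEMB} for DB profiles even without abstainers.

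\textbf{Main obstacle.} The delicate point is the forward direction: showing that $w\in K$ truly forces \emph{all} clause vertices out of $K$ and forces the literal part of $K$ to be an honest kernel of $G_{U,C}$ (not merely of some induced subgraph), together with checking that the extra vertex $w$ and its incident arcs do not accidentally let a ``bad'' kernel through — e.g.\ one where $w$ absorbs a clause vertex that should instead have been absorbed by a literal vertex. Handling this amounts to a careful but routine case check on independence and absorption in $\GPWA$, and confirming the symmetry of $\Acc$ so that the DB realizability lemma of Section~\ref{sec:dbsn} applies verbatim.
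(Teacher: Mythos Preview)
Your plan has a genuine gap, and it is precisely the one you flagged yourself and then dismissed as ``routine.'' You correctly observe at the outset that in a symmetrical profile every non-abstainer is a guru in some equilibrium (take any maximal independent set containing her), so you \emph{must} leave the symmetrical world. But your ``cleanest route'' builds a profile that is, by your own description, symmetrical: you set $\Acc(w)=\{v^c_1,\dots,v^c_{n_c}\}$ and put $w\in\Acc(v^c_j)$ for every $j$. Hence $\GPWA$ is symmetric, and \textbf{MEMB} for $w$ is a trivial YES instance regardless of whether $(U,C)$ is satisfiable.

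Concretely, in your digraph the set $K=\{w\}\cup\{v^x_{it}:i=1,\dots,n_u\}$ is always a kernel: it is independent (neither $w$ nor any $v^x_{it}$ sees another element of $K$), and it is absorbing because every $v^x_{if}$ is absorbed by $v^x_{it}$ while every $v^c_j$ is absorbed by $w$ itself. So $w$ is a guru of an equilibrium even when $(U,C)$ is unsatisfiable. The step that fails is exactly the forward direction you called the ``main obstacle'': from $w\in K$ you get $v^c_j\notin K$ by independence, but since $w\in\Acc(v^c_j)$, the clause vertex $v^c_j$ is happily absorbed by $w$, and no literal vertex is forced into $K$.

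What is missing is the asymmetry that the paper engineers: the designated voter must \emph{accept} every clause vertex (so independence kicks them out of any kernel containing her) while the clause vertices must \emph{not} accept her (so they cannot be absorbed by her and must be absorbed by a literal). The paper obtains this with one extra intermediate vertex $v_t$ adjacent to all clause vertices and to the designated voter $v_q$, taking shortest-path distance in this graph, and giving $v_q$ threshold $2$ while everyone else has threshold $1$. Then $v_q$ sees the clause vertices (distance $2$) but they do not see $v_q$ (their threshold is $1$), and $v_t$ --- the only other potential absorber for a clause vertex --- is excluded from the kernel because it accepts $v_q$. Your single symmetric star at $w$ cannot achieve this; you need the one-sided arc, and in the DB model that requires unequal thresholds (and, to keep the distances metric, an intermediate vertex).
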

\begin{proof} (sketch)
We only give the way the reduction is built (see Appendix for the full proof). Let us consider a 3-SAT instance with a set $U$ of variables and a set $C$ of clauses. We consider a graph made of:
\begin{itemize}
	\item The undirected version of the graph $G_{U,C}$ associated to $(U,C)$ (see Figure~\ref{graphguc} in Section~\ref{sec:symmetrical});
    \item Two adjacent vertices $v_t$ and $v_q$; $v_t$ is also adjacent to all clause vertices $v^c_j$.
\end{itemize}
Each vertex of this graph is a voter (we have $2n_u+n_c+2$ voters), and the distance between voters $i$ and $j$ is the shortest path (number of edges, the graph is unweighted) between the two vertices representing $i$ and $j$ in the graph. The acceptability threshold is 1 for all voters except $v_q$ which has an acceptability threshold of 2; they all prefer to vote than to abstain. We can show that the 3-SAT instance is satisfiable iff the DB profile induced by the corresponding distance admits a Nash-stable delegation function in which $v_q$ is a guru.  
\end{proof}

\subsection{Convergence of Dynamics}

Since an equilibrium always exists, we consider now the convergence of dynamics. Example~\ref{ex:nonconvergencedbsn} shows that, under better response, the convergence is not guaranteed in the case of symmetrical preference profiles. Therefore, it is the same in the case of DB preference profiles. We now extend Theorem~\ref{th:cvbrd} and show that under BRD the convergence is guaranteed under DB preference profiles.

\begin{theorem}~\label{th:bdrconvdb}
Given a DB preference profile, a BRD dynamics always converges.
\end{theorem}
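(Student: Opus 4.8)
The plan is to reduce the statement to showing that the set of gurus $\Gurus(d_t)$ is eventually constant. This suffices: if $\Gurus(d_t)=G^\ast$ for every $t$ past some step $T$, then the next time a voter $i$ receives the token her achievable outcomes are exactly $\{i,0\}\cup(G^\ast\setminus\{i\})$, so she best‑responds by delegating directly to her favourite guru in $\Acc(i)\cap G^\ast$ (or by voting, if $\Acc(i)\cap G^\ast=\emptyset$ and $i$ is a non‑abstainer, or by abstaining otherwise); since this choice depends only on the fixed set $G^\ast$ and on her fixed preferences, she makes the same move at every later token, and so after one more round the whole delegation function is constant.

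To prove that $\Gurus(d_t)$ stabilizes I would argue by induction on $n=|\VoterSet|$ (the case $n\le 1$ being trivial), using the following feature of DB profiles: if $j\in\Acc(i)$ and $\At{j}\ge\At{i}$, then $i\in\Acc(j)$, since $dist(i,j)\le\At{i}\le\At{j}$; equivalently, whenever $\GPWA$ has the arc $(i,j)$ but not the arc $(j,i)$, one has $\At{j}<\At{i}$. If $\VoterSet$ has no non‑abstainer, then each voter abstains once she has received the token and $\Gurus(d_t)=\emptyset$ eventually. Otherwise, among the voters that are a guru at some step past the first round (this set is non‑empty whenever $\VoterSet$ has a non‑abstainer, and every such voter is a non‑abstainer, as voting is her best response at some token), pick $w$ with smallest threshold $\At{w}$, breaking ties arbitrarily, and let $t_w$ be the first step past the first round at which $w$ is a guru.

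The core step — and the one I expect to be the main obstacle, because of the timing bookkeeping it requires — is to show that $w$ is a guru at every step $\ge t_w$. Suppose not, and let $t'>t_w$ be the first step at which $w$ is not a guru; then $w$ is a guru throughout $[t_w,t'-1]$, she holds the token at $t'$, and her best response moves her away from voting, so some $g\in\Gurus(d_{t'-1})$ lies in $\Acc(w)$. Let $\tau<t'$ be $w$'s previous token; her delegation is constant and equal to $w$ on $[\tau,t'-1]$, so $d_\tau(w)=w$ was a best response, which forces $\Gurus(d_{\tau-1})\cap\Acc(w)=\emptyset$, and in particular $g\notin\Gurus(d_{\tau-1})$. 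Since $g\in\Gurus(d_{t'-1})$, there is a last step $s\in[\tau,t'-1]$ at which $g$ switches to being a guru, so $g$ is a guru throughout $[s,t'-1]$ and, as $s\neq\tau$ (the token holders differ), $w$ is a guru at $s-1$. At step $s$, $g$ holds the token and votes as a best response, so $\Gurus(d_{s-1})\cap\Acc(g)=\emptyset$; since $w\in\Gurus(d_{s-1})$, this gives $w\notin\Acc(g)$. Combining $g\in\Acc(w)$, $w\notin\Acc(g)$ and the structural feature yields $\At{g}<\At{w}$, contradicting minimality of $\At{w}$, because $g$ is a guru at step $t'-1>r_1$.

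It remains to close the induction. For every voter $j$ with $w\in\Acc(j)$, at any token past $t_w$ her best‑response outcome is at least as good as $w$, hence strictly better than voting, so from her first token past $t_w$ on she is permanently a non‑guru. Put $V'=\VoterSet\setminus(\{w\}\cup\{j: w\in\Acc(j)\})$, a set of size at most $n-1$; past some step one has $\Gurus(d_t)=\{w\}\cup(\Gurus(d_t)\cap V')$, and since no voter of $V'$ accepts $w$, $w$ is never the best option of a $V'$‑voter, so the new status of a $V'$‑voter after a token depends only on $\Gurus(d_t)\cap V'$ and on her preferences, while a token given to a voter outside $V'$ leaves $\Gurus(d_t)\cap V'$ unchanged. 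Hence, from that step on, $\Gurus(d_t)\cap V'$ evolves exactly as the set of gurus of a best‑response dynamics on the DB sub‑profile induced on $V'$, in which every voter of $V'$ still receives the token infinitely often; by the induction hypothesis it stabilizes, and therefore so does $\Gurus(d_t)$. The two points that need care are this last reduction (verifying that the induced process is genuinely a best‑response dynamics) and the step‑by‑step bookkeeping in the core lemma.
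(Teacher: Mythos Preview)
Your argument is correct and complete (the two points you flag as needing care can indeed be filled in as you describe), but it proceeds along a different route from the paper. The paper does not induct on $|\VoterSet|$: instead it defines $G=\{i_1<\dots<i_s\}$ (ordered by threshold) as the set of voters that are gurus \emph{infinitely often}, waits until a step $t_0$ after which no voter outside $G$ is ever a guru, and then shows by induction on $k$ that each $i_k$ becomes a permanent guru at the first time $t_k>t_{k-1}$ it has the token and votes; once $t_s$ is reached, one more round yields convergence. Both arguments hinge on the same structural fact that $j\in\Acc(i)$ and $\At{j}\ge\At{i}$ imply $i\in\Acc(j)$, and your core lemma (the voter of smallest threshold among post–round~1 gurus stays a guru forever) is essentially the base case of the paper's inner induction, carried out with slightly different bookkeeping. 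What your approach buys is a more constructive identification of the first permanent guru (determined after one round rather than in the limit) and a clean peel-off recursion; the price is the extra verification that the residual process on $V'$ is itself a BRD on a DB sub-profile. The paper's approach avoids that reduction by working on the full voter set throughout, at the cost of the non-constructive definition of $G$ and of the waiting step $t_0$.
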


\begin{proof} (sketch)
The full proof is given in Appendix. Let us recall that each voter has the token infinitely many times. Consider a DB preference profile $P$, and a BRD dynamics with a starting delegation $d_0$ and a token function $T$. We assume that voters are numbered $1,2\dots,n$ in such a way that $\At{i}\leq \At{i+1}$, $i=1,\dots,n-1$. 

Let us define $G$ as the set of voters which are gurus (vote) infinitely many times in the dynamics: $G=\{i_1,\dots,i_s\}$ with $i_1\leq i_2\leq \dots \leq i_s$. Note that, obviously, $G$ contains no abstainers. Since voters in $\VoterSet \setminus G$ are gurus finitely many times, let us consider a step $t_0$ such that, for any $t\geq t_0$, no voter in $\VoterSet \setminus G$ are gurus (they always delegate or abstain). 
Let $t_1$ be the first time $t>t_0$ such that $i_1$ has the token and decides to vote. Since $i_1$ decides to vote at $t_1$, no voter in $\Acc(i_1)$ is a guru. Then, while $i_1$ is a guru: 
\begin{itemize}
\item no voter $j>i_1$ in $\Acc(i_1)$ ever becomes a guru: indeed, since $\At{i}$ are in non decreasing order, if $j\in \Acc(i_1)$ then $i_1\in \Acc(j)$. While $i_1$ is a guru $j$ does not decide to vote.
\item no voter $j<i_1$ in $\Acc(i_1)$ ever becomes a guru: indeed, these are in $\VoterSet \setminus G$ and since $t_1\geq t_0$ we know that they always delegate or abstain.
\end{itemize}
Then no voter in $\Acc(i_1)$ becomes a guru, so $i_1$ will vote (be a guru) forever. By recursively defining $t_k$ as the first time $t>t_{k-1}$ such that $i_k$ has the token and decides to vote, we can show using similar arguments that $i_k$ remains a guru forever after time $t_k$. Thus, at time $t_{s}$: voters in $G$ are gurus forever, and voters in $\VoterSet \setminus G$ never become gurus. From $t_s$ we only have to wait for another round to reach a Nash-stable delegation function. 
\end{proof}

We note that, as in the proof of existence of equilibrium, we made here no specific assumption on the function $dist$ except that $dist(i,j)=dist(j,i)$.

\section{Conclusion and Future Work}
\label{sec:conclusion}
We have investigated the stability of the delegation process in liquid democracy when voters have restricted types of preference on the agent representing them. Interestingly, while the existence of an equilibrium of this process is NP-hard to decide when preferences are unrestricted \cite{escoffier2018LD}, we have showed that various natural structures of preference, namely single-peaked, symmetrical and distance-based preferences, guarantee the existence of an equilibrium. For these structures of preference, we have obtained positive and negative results which surprisingly differ for the different structures of preference studied. For instance, while single-peaked preferences are the only ones studied that make it possible to solve efficiently all the optimization problems that we investigated, they also form the only type of restricted preferences studied that do not guarantee the convergence of the delegation process under best response dynamics.

For future work, we could extend our results to other structures of preferences. Secondly, it would be interesting to study the \emph{price of anarchy} of the delegation games induced by liquid democracy both under unrestricted and structured preferences.  
A last direction would be to extend our results to the framework of \emph{viscous democracy} \cite{boldi2011viscous}. In this setting, the weight of a delegation decreases exponentially with the length of the delegation path. Hence, the preferences of an agent would be defined on the delegation paths.

\bibliographystyle{alpha}
\bibliography{sample}

\section{Appendix of Section \ref{sec:singlePeaked}}


\subsection{Proof of Proposition~\ref{prop:corrKernelPath}}
We first give a technical lemma on the structure of kernels of interval catch graphs.
Consider an interval catch digraph $G$$=$$(V,A)$ defined by the values $l_i, r_i$, $i \in V$. For $U \subseteq V$, let $G[U]$ denote the subgraph of $G$ induced by the vertices in $U$.

\begin{lemma}
\label{lem:kernelIntervalCatch}
Let $K \subseteq V$ and $k^1, \dots, k^p$ be the vertices of $K$ in increasing order.
Define $I_0 = \{1, \dots, k^1\}$, $I_t = \{k^t, \dots, k^{t+1}\}$ for every $t \in \{ 1, \dots, p-1\}$, and $I_p = \{ k^p, \dots, n\}$.

Then the set $K$ is a kernel of $G$ if and only if $K \cap I_t$ is a kernel of $G[I_t]$ for every $t \in \{0, \dots, p\}$.
\end{lemma}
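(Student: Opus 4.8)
The plan is to prove both implications by carefully exploiting the structure of interval catch digraphs, namely that the out-neighborhood of any vertex $i$ is a contiguous block $\{l_i,\dots,r_i\}\setminus\{i\}$ containing $i$ in its ``span''. The key structural observation I would isolate first is that an arc of $G$ can only join vertices that are ``not separated by an element of $K$'' in a strong sense: if $i<j$ and there is an arc between $i$ and $j$ (in either direction), and $k^t$ is any element of $K$ with $i\le k^t\le j$, then in fact $k^t\in\{i,j\}$ cannot be assumed, but one can show the arc forces $i$ and $j$ to lie in a common interval $I_t$. More precisely: if $(i,j)\in A$ with $i<j$ then $r_i\ge j$, so the whole block $\{i,\dots,j\}$ is out-neighbors of $i$ (plus $i$ itself); similarly if $(j,i)\in A$ with $i<j$ then $l_j\le i$ so $\{i,\dots,j\}\subseteq\{l_j,\dots,r_j\}$. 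In either case every vertex strictly between $i$ and $j$ is dominated appropriately. The upshot I want is: \emph{any arc of $G$ between two vertices $i<j$ has the property that no element of $K$ lies strictly between $i$ and $j$ unless $i$ or $j$ itself is in $K$}; equivalently, both endpoints lie in a single $I_t$. This is the lemma that makes the ``gluing'' work, and proving it cleanly is the main obstacle — everything else is bookkeeping.

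Granting that observation, the forward direction ($K$ a kernel $\Rightarrow$ each $K\cap I_t$ a kernel of $G[I_t]$) goes as follows. Independence of $K\cap I_t$ is immediate from independence of $K$. For absorption: take $u\in I_t\setminus K$; since $K$ absorbs $u$ in $G$, there is $k\in K$ with $(u,k)\in A$. By the structural observation applied to the arc $(u,k)$, the endpoints $u$ and $k$ lie in a common interval $I_{t'}$; but $u$ determines $t$ uniquely among the intervals it can belong to as a non-endpoint, and since $u\in I_t\setminus K$ we have $t'=t$ (here one uses that the interior of $I_t$, i.e. $I_t$ minus its two $K$-endpoints, is disjoint from all other $I_{t''}$). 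Hence $k\in K\cap I_t$ absorbs $u$ inside $G[I_t]$. One must also handle the boundary cases $I_0$ and $I_p$ and the degenerate case $p=0$ separately, but they are routine.

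For the converse ($K\cap I_t$ a kernel of $G[I_t]$ for all $t$ $\Rightarrow$ $K$ a kernel of $G$): absorption is easy — any $u\notin K$ lies in some $I_t$ (with $u$ in its interior), is absorbed by some $k\in K\cap I_t\subseteq K$, and the same arc witnesses absorption in $G$. The delicate part is independence: suppose for contradiction that $(k^a,k^b)\in A$ with $k^a,k^b\in K$, say $k^a<k^b$. Apply the structural observation to this arc: it forces $k^a$ and $k^b$ to lie in a common interval $I_{t'}$, and since consecutive elements of $K$ bound the intervals $I_t$, the only way two distinct elements of $K$ share an interval is $b=a+1$ and $I_{t'}=I_a=\{k^a,k^{a+1}\}$. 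But then $k^a$ and $k^b$ are both in $K\cap I_a$ with an arc between them, contradicting that $K\cap I_a$ is independent in $G[I_a]$. Therefore $K$ is independent, completing the proof. I would write up the structural observation as its own displayed claim at the start, since it is used three times; the rest is then a short verification.
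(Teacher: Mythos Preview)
Your ``structural observation'' is false as stated, and both uses of it in the proof fail. You claim that if there is an arc between $i<j$ then $i$ and $j$ lie in a common interval $I_t$ (equivalently, no element of $K$ lies strictly between them unless one endpoint is in $K$). Counterexample: take $K=\{2,4,6\}$, let $u=3$ with $r_3=6$, so $(3,6)\in A$. Then $u=3$ lies only in $I_1=\{2,3,4\}$ while $k=6$ lies only in $I_2=\{4,5,6\}$; they share no interval, and $4\in K$ sits strictly between them with neither endpoint in $K$. So in the forward direction you cannot conclude that the absorbing $k\in K$ lies in $I_t$; and in the converse you cannot conclude from $(k^a,k^b)\in A$ that $b=a+1$.

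What \emph{is} correct is your ``more precisely'' remark: if $(i,j)\in A$ with $i<j$ then every vertex in $\{i,\dots,j\}$ is an out-neighbor of $i$ (and symmetrically). That is exactly the tool the paper uses, but applied differently. For absorption in the forward direction, the paper argues by contrapositive: if $u$ with $k^t<u<k^{t+1}$ has neither $k^t$ nor $k^{t+1}$ as out-neighbor, then $k^t<l_u\le r_u<k^{t+1}$, so $u$ has no out-neighbor in $K$ at all. Equivalently, in your direct approach, once you know $(u,k)\in A$ with $k\in K$, the interval property does not force $k\in I_t$, but it does force $k^t$ or $k^{t+1}$ to be an out-neighbor of $u$ (whichever lies between $u$ and $k$). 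For independence in the converse, from $(k^a,k^b)\in A$ with $k^a<k^b$ the paper concludes $r_{k^a}\ge k^b\ge k^{a+1}$, hence $(k^a,k^{a+1})\in A$, contradicting independence of $K\cap I_a$; the case $(k^b,k^a)\in A$ is symmetric. Replace your false ``upshot'' by these sliding arguments and the proof goes through; note also that you must treat the case where the arc goes from the larger to the smaller $K$-element, which you currently omit.
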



\begin{proof}
Assume $K$ is a kernel of $G$. Clearly $K \cap I_t$ is an independent set of $G[I_t]$ for every $t \in \{0, \dots, p\}$. Assume there exists $t \in \{1, \dots, p-1 \}$ such that $K \cap I_t$ is not absorbing in $G[I_t]$, i.e., there exists $j$ such that $k^t < j < k^{t+1}$ and neither $k^t$ nor $k^{t+1}$ are out-neighbors of $j$. It implies $k^t < l_j \leq r_j < k^{t+1}$, and since $k^t$ and $k^{t+1}$ are successive vertices of $K$ it comes that no vertex of $K$ absorbs $j$, a contradiction. Hence $K \cap I_t$ must absorb all vertices in $G[I_t]$. Similar argument applies for $t=0$ and $t=p$. Hence $K \cap I_t$ is a kernel of $G[I_t]$ for every $t \in \{0, \dots, p\}$.

Conversely, assume $K \cap I_t$ is a kernel of $G[I_t]$ for every $t \in \{0, \dots, p\}$. Then $K$ is clearly an absorbing set of $G$. Assume it is not independent: there exist two vertices $k^{t}$ and $k^{t'}$ in $K$ that are neighbors. Assume w.l.o.g. $t < t'$ and $(k^t,k^{t'}) \in A$ (this can be assumed up to reversing the ordering of vertices). Then $k^{t'}$ is an out-neighbor of $k^t$, which implies $k^{t'} \leq r_{k^{t}}$. Since $t < t'$ we have also $k^{t+1} \leq k^{t'}$. Hence $k^{t+1} \leq r_{k^t}$ and $(k^t,k^{t +1}) \in A$, and $K \cap I_t$ is not an independent set in $G[I_t]$.
\end{proof}

We now provide the proof of Proposition~\ref{prop:corrKernelPath}.
\medbreak
\noindent\textbf{Proposition \ref{prop:corrKernelPath}. }{There is a one-to-one correspondence between sets of gurus of equilibria for the preference profile $P$, and $s-t$ paths in the auxiliary digraph of $\GPWA$.}

\begin{proof}
We prove the one-to-one correspondence between kernels of $\GPWA$ and $s-t$ paths of $\Gaux$. The result then follows with Theorem~\ref{th:equivGurusKernel}.
Since $\GPWA$ is an interval catch digraph, with Lemma~\ref{lem:kernelIntervalCatch}, a set \(K\) is a kernel of \(\GPWA\) is a kernel iff: every pair $\{k,k'\}$ of successive voters in $K$ (resp. $k$ the smallest element of $K$, $k'$ the largest element of $K$) is a kernel of the subgraph of $\GPWA$ induced by $\{k, \dots, k'\}$ (resp. $\{1, \dots, k\}$, $\{k', \dots, n\}$). By definition of \(\Gaux\), it is equivalent with \(K\) being the set of vertices of a \(s-t\) path in \(\Gaux\).
\end{proof}

\subsection{Proof of Theorem \ref{thrm:auxdigraph}}
\noindent\textbf{Theorem \ref{thrm:auxdigraph}. }{
Given an SP preference profile $P$: the auxiliary digraph of $\GPWA$ is computable in $O(n^2)$ time; problem \textbf{MEMB} is solvable in $O(n^2)$ time; problems \textbf{MINDIS}, \textbf{MINMAXVP} and \textbf{MINABST} are solvable in $O(n^3)$ time.}

\medbreak
For the sake of readibility, we decompose this Theorem in Lemmas \ref{th:sp1} to \ref{th:sp5}.

\begin{lemma} \label{th:sp1}
Given an SP preference profile \(P\), the auxiliary digraph \(\Gaux\) of \(\GPWA\) is computable in $O(n^2)$ time.
\end{lemma}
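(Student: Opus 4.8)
The plan is to reduce $\GPWA$ to its interval catch representation, translate each of the three families of candidate arcs of $\Gaux$ into a condition on the interval endpoints, and then evaluate all of these conditions through a few incremental scans. First I would remove the abstainers in $O(n)$ time and re-index the remaining voters along the single-peaked axis; as observed just after Proposition~\ref{prop:SPintervalcatch}, the restricted profile is still single-peaked, so by that proposition $\GPWA$ is an interval catch digraph on $\{1,\dots,n\}$. I would then compute its representation $(l_i,r_i)$ --- where $l_i$ (resp.\ $r_i$) is the smallest (resp.\ largest) voter accepted by $i$, or $i$ itself if none is smaller (resp.\ larger) --- directly from the sets $\Acc(i)$ in $O(n^2)$ time (which is also the order of the input size).

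Next I would translate the definition of $\Aaux$ into conditions on the endpoints $l_m,r_m$, using that the out-neighborhood of a vertex $m$ in $\GPWA$ is exactly $\{l_m,\dots,r_m\}\setminus\{m\}$. For $i<j$: a vertex $m$ with $i<m<j$ has an arc to $i$ iff $l_m\le i$ and an arc to $j$ iff $r_m\ge j$, while $i$ and $j$ are non-adjacent iff $r_i<j$ and $l_j>i$. Since $\{i,j\}$ (resp.\ $\{j\}$, resp.\ $\{i\}$) is a kernel of $\GPWA$ restricted to $\{i,\dots,j\}$ (resp.\ $\{1,\dots,j\}$, resp.\ $\{i,\dots,n\}$) exactly when it is both independent and absorbing in that induced subgraph --- which on the independence side is automatic for a singleton, as interval catch digraphs have no loops --- the definition of $\Gaux$ unfolds to: (a) $(i,j)\in\Aaux$ for $i<j$ iff $r_i<j$, $l_j>i$, and every $m$ with $i<m<j$ satisfies $l_m\le i$ or $r_m\ge j$; (b) $(s,j)\in\Aaux$ iff $r_m\ge j$ for every $m<j$; (c) $(i,t)\in\Aaux$ iff $l_m\le i$ for every $m>i$. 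Each equivalence is immediate from the definitions.

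The only step where a naive implementation would be too slow is testing condition (a) for all $\binom{n}{2}$ pairs, which would take $O(n^3)$ time; this is the main obstacle. To get $O(n^2)$ I would fix $i$ and scan $j=i{+}1,\dots,n$ while maintaining the quantity $M=\min\{\,r_m : i<m<j,\ l_m>i\,\}$, with the convention $M=+\infty$ when this set is empty; $M$ records the tightest right-endpoint among the vertices below $j$ that $i$ fails to absorb, so condition (a) for the pair $(i,j)$ holds iff $r_i<j$, $l_j>i$ and $M\ge j$, and after processing $j$ one updates $M\leftarrow\min(M,r_j)$ precisely when $l_j>i$. This costs $O(n-i)$ time for each $i$, hence $O(n^2)$ overall. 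The arcs of type (b) are obtained by one left-to-right scan maintaining the running minimum of $r_1,\dots,r_{j-1}$, and those of type (c) by one right-to-left scan maintaining the running maximum of $l_{i+1},\dots,l_n$, each in $O(n)$ time; the degenerate cases $j=i{+}1$ in~(a), $j=1$ in~(b) and $i=n$ in~(c) are covered by taking an empty minimum (resp.\ maximum) to be $+\infty$ (resp.\ $-\infty$). Adding up the preprocessing, the three scans, and the writing of up to $O(n^2)$ arcs yields the claimed $O(n^2)$ bound. I expect the correctness of the incremental quantity $M$ --- that the vertices it records are the only obstructions to absorption --- to be the point requiring the most care; the remainder is bookkeeping.
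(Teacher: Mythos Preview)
Your proposal is correct and follows essentially the same approach as the paper: both fix a left endpoint $i$, scan $j$ to the right, and maintain the running minimum of $r_m$ over those intermediate $m$ with $l_m>i$ (the paper calls this $r^*$, you call it $M$), using $M\ge j$ together with $r_i<j$ and $l_j>i$ as the test for the arc $(i,j)$. Your treatment is in fact slightly more complete than the paper's, since you spell out the separate linear scans for the $(s,j)$ and $(i,t)$ arcs, which the paper leaves implicit.
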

\begin{proof}
The digraph $\GPWA$ is computable in $O(n^2)$ time.
Then we prove that it is possible to compute in linear time the out-neighborhood of every vertex of the auxiliary digraph $\Gaux$ of $\GPWA$.
Let $i \in \VoterSet$. Define for every $j > i$ the value $r^*_j = \min \{ r_k\ |\ k \in \{ i+1, \dots, j-1\},\ i \notin \Acc(k)\}$. We claim that: \emph{the pair $\{i,j\}$ is an absorbing set of $G_P^*[\{ i,  \dots, j\}]$ if and only if $j \leq r^*_j$}. Indeed, if $j \leq r^*_j$, then for every $k \in \{i+1, \dots,j-1\}$ either $i \in \Acc(k)$ or $j \leq r^*_j \leq r_k$: then $j \in \Acc(k)$. Hence in both cases the agent $k$ accepts $i$ or $j$ as a guru: $\{i,j\}$ is absorbing. Conversely, if $j > r^*_j$ then by definition of $r^*_j$ there exists $k \in \{i+1, \dots, j-1\}$ such that $i \notin \Acc(k)$ and $j > r_k$. Then $j \notin \Acc(k)$, and the vertex $k$ is neither absorbed by $i$ nor by $j$.

Using this claim, we prove that the out-neighborhood of $i$ can be computed in linear time by the following procedure. Initialize $j:=i+1$ and $r^* := +\infty$. While $j \leq n$, apply the following: (i) If $i \notin \Acc(j)$, $j \notin \Acc(i)$, and $j \leq r^*$, then add $(i,j)$ to $\Aaux$. (ii) If $i \notin \Acc(j)$, update $r^*$ by $r^* := \min\{ r^*, r_j\}$. (iii) Increment $j$.
\end{proof}

\begin{lemma}\label{th:sp2}
Given an SP preference profile, problem \textbf{MEMB} is solvable in \(O(n^2)\) time.
\end{lemma}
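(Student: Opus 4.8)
The plan is to reduce MEMB to a reachability question in the auxiliary digraph $\Gaux$. By Proposition~\ref{prop:corrKernelPath}, there is a one-to-one correspondence between sets of gurus of equilibria and $s$-$t$ paths in $\Gaux$; hence a voter $i \in \VoterSet \setminus \AbstSet$ is a guru in some equilibrium if and only if the vertex $i$ lies on some $s$-$t$ path in $\Gaux$. So MEMB for voter $i$ is equivalent to asking whether $i$ is reachable from $s$ \emph{and} $t$ is reachable from $i$ in $\Gaux$.

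The steps are as follows. First, compute $\GPWA$ (its intervals $l_k,r_k$) and then the auxiliary digraph $\Gaux$; by Lemma~\ref{th:sp1} this takes $O(n^2)$ time, and $\Gaux$ has $O(n)$ vertices and $O(n^2)$ arcs. Second, run a graph search (BFS or DFS) from $s$ in $\Gaux$ to determine the set $R_s$ of vertices reachable from $s$; this costs $O(|\Vaux| + |\Aaux|) = O(n^2)$. Third, run a graph search from $t$ in the reverse digraph $\Gaux$ to determine the set $R_t$ of vertices that can reach $t$; again $O(n^2)$. Finally, output ``yes'' if and only if $i \in R_s \cap R_t$. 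Correctness: if $i \in R_s \cap R_t$, concatenating an $s$-to-$i$ path with an $i$-to-$t$ path gives a (not necessarily simple) $s$-$t$ walk through $i$; since $\Gaux$ is acyclic by construction — arcs only go from lower-indexed to higher-indexed vertices, with $s$ a source and $t$ a sink — this walk is in fact a simple $s$-$t$ path, which by Proposition~\ref{prop:corrKernelPath} corresponds to a kernel $K$ of $\GPWA$ with $i \in K$, hence to an equilibrium in which $i$ is a guru. Conversely, any equilibrium with $i$ as a guru yields an $s$-$t$ path through $i$, so $i \in R_s \cap R_t$.

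I do not anticipate a serious obstacle here: the only point that needs a line of justification is that $\Gaux$ is a DAG (so that reachability from both sides genuinely produces a simple $s$-$t$ path rather than merely a walk), which is immediate from the definition since every arc $(i,j)$ with $i,j \in \VoterSet$ satisfies $i < j$, every arc out of $s$ goes to some $j \in \VoterSet$, and every arc into $t$ comes from some $i \in \VoterSet$. The overall running time is dominated by the $O(n^2)$ construction of $\Gaux$ and the two linear-time searches, giving the claimed $O(n^2)$ bound.
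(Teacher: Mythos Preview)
Your proposal is correct and follows essentially the same approach as the paper: reduce \textbf{MEMB} via Proposition~\ref{prop:corrKernelPath} to testing whether $i$ lies on some $s$--$t$ path in $\Gaux$, build $\Gaux$ in $O(n^2)$, and decide this with two graph searches. Your explicit justification that $\Gaux$ is a DAG (so that an $s$--$i$ walk concatenated with an $i$--$t$ walk is a simple path) is a useful detail that the paper leaves implicit.
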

\begin{proof}
From Proposition~\ref{prop:corrKernelPath} the problem \textbf{MEMB} is equivalent to the problem of deciding the existence of an $s-t$ path in $\Gaux$ that goes through voter $i$, i.e., finding an $s-i$ path and a $i-t$ path. It can be done in $O(n^2)$ time by computing $\Gaux$ then performing two graph searches.
\end{proof}

\emph{Optimization problems.}
The optimization problems introduced in Section~\ref{sec:intro} use objective functions that depend not only on the set of gurus $\Gurus(d)$ but also on the values of $\GuruOf(i,d)$ for each voter $i$.

\begin{obs}
\label{obs:affectGuruSP}
Let $d$ be an equilibrium and let $K = \Gurus(d)$. Then the guru $\GuruOf(i,d)$ of every agent $i \notin K$ is the guru that $i$ prefers among: the closest voter of $K$ on her left, the closest voter of $K$ on her right; and abstention.
\end{obs}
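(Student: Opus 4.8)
The plan is to unpack the definition of Nash-stability for the single voter $i$ and combine it with single-peakedness. Write $K=\Gurus(d)$ and let $k_L$ (resp. $k_R$) denote the largest element of $K$ strictly below $i$ (resp. the smallest element of $K$ strictly above $i$), whenever such an element exists; drop $k_L$ or $k_R$ from consideration if it is undefined. The candidate set for $\GuruOf(i,d)$ is then $S_i=\{k_L,k_R,0\}$, and the observation amounts to proving two things: (a) $\GuruOf(i,d)\in S_i$, and (b) $\GuruOf(i,d)$ is $\succ_i$-maximal in $S_i$.

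Part (b) is almost immediate. Since $d$ is an equilibrium, it is Nash-stable for $i$, so $\GuruOf(i,d)\succ_i g$ for every $g\in(K\cup\{0,i\})\setminus\{\GuruOf(i,d)\}$. As $k_L,k_R\in K$ and $0$ is always among the competitors, the guru of $i$ beats every other element of $S_i$, hence is the $\succ_i$-best element of $S_i$. (Here it is worth recalling why these competitors are genuinely available to $i$: a voter can realise any current guru $k\in\Gurus(d)$ as her own outcome by delegating directly to $k$, since $k$ keeps voting when $i$ alone changes her delegation; likewise $i$ can abstain or vote herself. This is exactly what the paper's definition of Nash-stability for $i$ encodes, so it can simply be cited.)

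The substance is in part (a). By definition of a guru, $\GuruOf(i,d)$ always lies in $K\cup\{0\}$; if it equals $0$ we are done, so assume $\GuruOf(i,d)=k\in K$. Since $i\notin K$ we have $k\neq i$, so either $k<i$ or $k>i$; I will treat $k<i$, the other case being symmetric. I claim $k=k_L$, i.e., no element $k'$ of $K$ satisfies $k<k'<i$. Indeed, if such a $k'$ existed, then single-peakedness for $i$ applied to the triple $k<k'<i$ gives $k'\succ_i k$; but $k'\in K$ and $k'\neq k=\GuruOf(i,d)$, so Nash-stability for $i$ forces $k\succ_i k'$, a contradiction. Hence $k$ is the closest guru to $i$ on her left, which is precisely $k_L\in S_i$; the case $k>i$ gives $k=k_R$ by the mirror argument.

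The only place where care is needed — and really the only obstacle — is matching the single-peakedness axiom to the correct triple of voters and confirming that the Nash-stability inequality is available for the specific competitors $k_L$, $k_R$, $0$ (covered by the remark in part (b)). Everything else is bookkeeping on the degenerate cases in which $i$ has no guru to her left or none to her right, where $S_i$ simply shrinks accordingly; the argument above goes through verbatim on the remaining elements.
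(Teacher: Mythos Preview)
Your proof is correct and follows essentially the same approach as the paper's. The paper's argument is just a terser version of yours: it first notes that Nash-stability makes $\GuruOf(i,d)$ the $\succ_i$-best element of $K\cup\{0,i\}$, then uses $i\notin K$ to drop $i$, and finally invokes single-peakedness to reduce the choice within $K$ to the two closest gurus---precisely what your parts (a) and (b) spell out in detail.
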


\begin{proof}
Nash-stability implies that $\GuruOf(i,d)$ is the preferred choice of $i$ in $K \cup \{0,i\}$ (this holds for all preferences and not only SP preferences).
Let $i \notin K$. Since $K$ is the set of gurus of the equilibrium $d$, it comes $\GuruOf(i,d) \not= i$, hence $\GuruOf(i,d)$ is the element that $i$ prefers in $K \cup \{0\}$. By single-peakness, the preferred guru of $i$ in the set $K$ is either the closest guru on her left or the closest guru on her right. The result follows.
\end{proof}

A consequence is the following. Assume $d$ is an equilibrium with set of gurus $\Gurus(d)$, associated with an $s-t$ path of $\Gaux$. Let $(k,k')$ be an arc of this path. Then from Observation~\ref{obs:affectGuruSP} every $i \in \{k+1, \dots, k'-1\}$ abstains or it has $k$ or $k'$ as guru in $d$, i.e., $\GuruOf(i,d) \in \{0,k,k'\}$. Let us present into details algorithms for solving optimization problems, based on this remark.

\begin{lemma} \label{th:sp3}
For SP preference profiles, the problem \textbf{MINDIS} of finding a Nash-stable delegation function $d$ minimizing $\sum_{i\in \VoterSet} (\Score(i,d)-1)$ is solvable in $O(n^3)$ time.
\end{lemma}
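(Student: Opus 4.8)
The plan is to reduce \textbf{MINDIS} to a shortest-path computation in a suitably weighted version of the auxiliary digraph $\Gaux$. By Proposition~\ref{prop:corrKernelPath}, every equilibrium corresponds to an $s$--$t$ path in $\Gaux$, whose internal vertices are exactly the gurus; conversely every such path yields an equilibrium, and by the remark following Observation~\ref{obs:affectGuruSP}, once the set of gurus $K$ is fixed the guru of every non-guru $i$ is determined: it is $i$'s preferred element among $\{0\}$ together with the closest guru of $K$ on her left and the closest guru on her right. Hence the total dissatisfaction $\sum_{i\in\VoterSet}(\Score(i,d)-1)$ depends only on $K$, and in fact decomposes along the arcs of the corresponding $s$--$t$ path.

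Concretely, first I would note that for an arc $(k,k')$ of $\Gaux$ with $k,k'\in\VoterSet$ and $k<k'$, the voters whose guru is determined ``by this arc'' are exactly those in $\{k+1,\dots,k'-1\}$: each such $i$ gets $\GuruOf(i,d)\in\{0,k,k'\}$, the best of the three in her preference order. For the arc $(s,k')$ I would attribute to it the guru $k'$ itself (contributing $\Score(k',d)-1=0$ since $k'$ votes) plus all voters in $\{1,\dots,k'-1\}$, whose guru is the best of $\{0,k'\}$ (there is no guru to their left). Symmetrically the arc $(k,t)$ accounts for $k$ together with the voters in $\{k+1,\dots,n\}$, whose guru is the best of $\{0,k\}$. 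Then I would define the weight $w(k,k')=\sum_{i\in\{k+1,\dots,k'-1\}}\big(\Score(i,d)-1\big)$ where $\GuruOf(i,d)$ is $i$'s favorite among $\{0,k,k'\}$, and analogously for the arcs incident to $s$ and $t$; every voter is counted exactly once across the arcs of any $s$--$t$ path (internal vertices $k$ of the path contribute their own $0$ via the arc entering or leaving them, according to a fixed convention, and the remaining voters fall strictly between two consecutive path vertices). Consequently the objective of \textbf{MINDIS} equals the weight of the corresponding $s$--$t$ path, so \textbf{MINDIS} reduces to a shortest $s$--$t$ path computation in the weighted digraph $\Gaux$.

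For the running time: $\Gaux$ has $n+2$ vertices and $O(n^2)$ arcs, and by Lemma~\ref{th:sp1} it is computable in $O(n^2)$ time. Computing a single arc weight $w(k,k')$ naively costs $O(n)$ (for each of the $O(n)$ voters between $k$ and $k'$, determine her rank of the best of three candidates, which is a constant-time lookup after $O(n)$ preprocessing of each preference list), so all $O(n^2)$ arc weights are computed in $O(n^3)$ time. Since $\Gaux$ is acyclic (all arcs go from smaller to larger index, with $s$ first and $t$ last), a shortest path is found in $O(n^2)$ time by dynamic programming over the topological order. The overall bound is therefore $O(n^3)$, and from the optimal path one recovers the set of gurus and then an explicit optimal equilibrium in $O(n^2)$ time as described after Theorem~\ref{th:equivGurusKernel}.

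The main obstacle is the bookkeeping that guarantees each voter's dissatisfaction is counted exactly once and correctly: one must be careful about the status of the path vertices themselves (they are gurus, with dissatisfaction $0$) versus the strictly-internal voters on each arc, and about the boundary arcs incident to $s$ and $t$ where only one neighboring guru exists. Fixing a clean convention — say, the arc leaving a vertex $k$ "owns" $k$ — and checking that it makes the per-arc weights well-defined and additive is the delicate part; the correctness of the reduction then follows directly from Observation~\ref{obs:affectGuruSP}, and everything else is routine.
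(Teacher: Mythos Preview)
Your proposal is correct and follows essentially the same approach as the paper: both reduce \textbf{MINDIS} to a shortest $s$--$t$ path computation in $\Gaux$ by assigning to each arc $(k,k')$ the total dissatisfaction of the voters in the half-open interval owned by that arc (the paper uses $\{k,\dots,k'-1\}$, matching your ``the arc leaving $k$ owns $k$'' convention), invoking Observation~\ref{obs:affectGuruSP} to compute each voter's contribution from the at most three candidates $\{0,k,k'\}$. Your treatment is slightly more explicit about the boundary arcs, the acyclicity of $\Gaux$, and the bookkeeping convention, but the argument and the $O(n^3)$ analysis are the same.
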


\begin{proof}
Build the auxiliary digraph $\Gaux$. For every arc $(k,k') \in \Aaux$, compute an arc-weight $w_{k,k'}$ as follows.
Let $i \in \{ k,  \dots, k'-1\}$, $i \not= s$ (where by abuse of notation $t-1$ is $n$). Define a value $r_{k,k'}(i)$ by: if $i \not=k$, $r_{k,k'}(i)$ is the rank of the preferred guru of $i$ in $\{0,k,k'\} \setminus \{s,t\}$ in $i$'s preference list; otherwise, $r_{k,k'}(i)$ is the rank of $i$ in $i$'s preference list. By Observation~\ref{obs:affectGuruSP}, if $d$ is an equilibrium in which $k$ and $k'$ are two successive gurus, then for every voter $i \in \{ k, \dots, k'-1\}$, it holds that $\Score(i,d) = r_{k,k'}(i)$.
Define now $w_{k,k'} = \sum_{i \in \{k, \dots, k'-1\},\ i \not= s} r_{k,k'}(i)$. All weights $w$ can be computed in $O(n^3)$ time.

Then the total dissatisfaction associated with a Nash-stable delegation function $d$ is equal to the weight of the $s-t$ path associated with its set of gurus $\Gurus(d)$. An optimal solution to the problem \textbf{MINDIS} can then be computed by computing a shortest $s-t$ path for the weights $w$.
\end{proof}

\begin{lemma} \label{th:sp4}
For SP preference profiles, the problem \textbf{MINMAXVP} of finding a Nash-stable delegation function $d$ minimizing $\max_{i\in \Gurus(d)} \VotingPower(i,d)$ is solvable in $O(n^3)$ time.
\end{lemma}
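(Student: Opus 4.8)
The plan is to mimic the treatment of \textbf{MINDIS}: build the auxiliary digraph $\Gaux$, which by Proposition~\ref{prop:corrKernelPath} represents every set of gurus of an equilibrium as an $s$-$t$ path, and then rewrite $\max_{i\in\Gurus(d)}\VotingPower(i,d)$ as an objective on such paths. The structural input is Observation~\ref{obs:affectGuruSP}: in an SP equilibrium every non-guru is represented by the closest guru on her left, the closest guru on her right, or by abstention. Hence if the gurus of $d$ in increasing order are $k^{1}<\dots<k^{p}$, a voter $i\neq k^{t}$ satisfies $\GuruOf(i,d)=k^{t}$ only if $i\in(k^{t-1},k^{t})$ and $k^{t}$ beats both $k^{t-1}$ and $0$ in $\succ_{i}$, or $i\in(k^{t},k^{t+1})$ and $k^{t}$ beats both $k^{t+1}$ and $0$ in $\succ_{i}$ (with the boundary intervals $\{1,\dots,k^{1}-1\}$ and $\{k^{p}+1,\dots,n\}$ when $t=1$ or $t=p$); in particular $\VotingPower(k^{t},d)$ depends only on the two neighbours of $k^{t}$ in the kernel, i.e.\ on the two arcs of the $s$-$t$ path incident to $k^{t}$.

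Concretely, I would first precompute, for every arc $(u,v)$ of $\Gaux$ (including the boundary arcs $(s,v)$ and $(u,t)$), the number $\alpha_{u,v}$ of voters strictly between $u$ and $v$ represented by the left endpoint $u$, and the number $\beta_{u,v}$ of those represented by the right endpoint $v$, according to the rule above; for a boundary arc only one endpoint is a real guru, so only one of the two quantities is relevant and being represented by it just means preferring it to abstention. Each such count is obtained by scanning the at most $n$ voters in the interval and reading a constant number of comparisons from the preference lists, so all of them take $O(n^{3})$ time in total (a prefix-sum refinement would give $O(n^{2})$, which is not needed here).

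Next I would form the directed line graph $L$ of $\Gaux$: its vertices are the arcs of $\Gaux$, and for each pair of consecutive arcs $(u,v),(v,w)$ of $\Gaux$ there is an arc of $L$ from $(u,v)$ to $(v,w)$ of weight $1+\beta_{u,v}+\alpha_{v,w}$, which by the remark above is exactly $\VotingPower(v,d)$ whenever $u,v,w$ are consecutive gurus of an equilibrium $d$. Adding a super-source $\sigma$ with weight-$0$ arcs to every vertex $(s,\cdot)$ of $L$ and a super-sink $\tau$ with weight-$0$ arcs from every vertex $(\cdot,t)$, the $\sigma$-$\tau$ paths of $L$ are in bijection with the $s$-$t$ paths of $\Gaux$, hence with the sets of gurus of equilibria, and along such a $\sigma$-$\tau$ path the maximum arc weight equals $\max_{i\in\Gurus(d)}\VotingPower(i,d)$ for the corresponding $d$ (each guru occurs exactly once as an intermediate vertex, contributing exactly one arc weighted by its voting power). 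Thus \textbf{MINMAXVP} amounts to finding a minimum-bottleneck $\sigma$-$\tau$ path in $L$. Since $L$ has $O(n^{2})$ vertices, $O(n^{3})$ arcs, and integer weights in $\{1,\dots,n\}$, such a path is computable in $O(n^{3})$ time by a standard bottleneck-shortest-path computation; from it one reads off a kernel of $\GPWA$, and an optimal Nash-stable $d$ is recovered by letting every voter delegate to her preferred guru, exactly as after Theorem~\ref{th:equivGurusKernel}.

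The main obstacle, and the reason \textbf{MINMAXVP} needs a bit more than \textbf{MINDIS} or \textbf{MINABST}, is that the objective is \emph{not} additive over the arcs of $\Gaux$: the voting power of a guru is governed by \emph{both} incident arcs of the $s$-$t$ path, so one cannot simply label the arcs of $\Gaux$ and run a shortest-path routine. Passing to the line graph (equivalently, running a dynamic program whose state is a pair of consecutive gurus) localizes the objective to single arcs at the price of one extra factor $n$ in the arc count; the remaining care is to check, via Observation~\ref{obs:affectGuruSP}, that no voter other than those counted by $\alpha$ and $\beta$ (and the guru herself) contributes to $\VotingPower(k^{t},d)$, that each is counted exactly once, and that the boundary arcs incident to $s$ and $t$ are handled correctly.
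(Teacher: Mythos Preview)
Your proposal is correct and follows essentially the same approach as the paper: both recognize via Observation~\ref{obs:affectGuruSP} that $\VotingPower(k^{t},d)$ depends only on the two arcs of the $s$--$t$ path incident to $k^{t}$, and both reduce \textbf{MINMAXVP} to an $O(n^{3})$ optimization whose state is a pair of consecutive gurus (the paper phrases this as a dynamic program $M(i,w)$ with $w$ the left voting power of $i$, while you phrase it as a min-bottleneck path in the line graph of $\Gaux$; your $\alpha_{u,v},\beta_{u,v}$ are exactly the paper's $w^{u}_{uv},w^{v}_{uv}$). The two presentations are interchangeable.
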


\begin{proof}
For every arc $(i,j)$ of the auxiliary digraph $\Gaux$, let $w^i_{ij}$ (resp. $w^j_{ij}$) denote the number of voters in $\{i+1, \dots, j-1\}$ whose preferred guru in $\{0,i,j\}$ is $i$ (resp. $j$). Given an $s-t$ path involving a guru $j$, the voting power of guru $j$ is exactly $w^j_{ij} + w^j_{jk}+1$, where $(i,j)$ and $(j,k)$ are the arcs of the $s-t$ path containing $j$. Hence, the problem \textbf{MINMAXVP} is equivalent to the problem of finding an $s-t$ path in the auxiliary digraph that minimizes the maximum value of $w^j_{ij} + w^j_{jk} + 1$ over pairs of consecutive arcs $(i,j)$ and $(j,k)$ of the path.

For every $i \in \VoterSet$, $w \in \{0, \dots, n\}$, let $M(i,w)$ be the minimum value $W$ such that: there exists an $s-i$ path where all gurus in $\{1,\ldots,i\}$ have voting power at most $W$ and $i$ has voting power at most $w$ on her left. 
We prove that all values $M(i,w)$ can be computed in $O(n^3)$ time.
Let $j \in \VoterSet$. Assume all values have been computed for agents on the left of $j$. For each predecessor $i$ of $j$ in $\Gaux$, for each value $w \in \{0, \dots, n\}$, form $M = \max\{M(i,w); w + w_{ij}^i+1\}$, then update $M(j, w^j_{ij}) := \min\{ M(j, w^j_{ij}); M\}$. 
Hence all values $M(j, \cdot)$ can be computed in quadratic time for every $j$. 
Finally the value $M(t,0)$ is the optimal value of the \textbf{MINMAXVP} optimization problem.
The auxiliary digraph $\Gaux$ can be computed in $O(n^2)$ time, and the $w^i_{ij}$ can be computed in $O(n^3)$. Thus the overall complexity is $O(n^3)$. Note that the optimal solution can be obtained by standard bookkeeping techniques without increasing the complexity of the method.
\end{proof}

\begin{lemma} \label{th:sp5}
For SP preference profiles, the problem \textbf{MINABST} of finding a Nash-stable delegation function $d$ minimizing $|\{i\in \VoterSet | \GuruOf(i,d) = 0\}|$ is solvable in $O(n^3)$ time.
\end{lemma}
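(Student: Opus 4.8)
(plan)
The plan is to follow the same template as the proofs of Lemmas~\ref{th:sp3} and~\ref{th:sp4}, turning \textbf{MINABST} into a shortest-path computation in the auxiliary digraph $\Gaux$. First I would record the two structural facts that make this work. By Proposition~\ref{prop:corrKernelPath}, the sets of gurus of equilibria are exactly the vertex sets $K$ of $s-t$ paths of $\Gaux$. By Observation~\ref{obs:affectGuruSP}, once $K$ is fixed the guru of every voter $i \notin K$ is determined: it is whichever $i$ prefers among the closest element of $K$ on her left, the closest element of $K$ on her right, and abstention (only a single neighbour and abstention if $i$ lies before the first or after the last element of $K$). Consequently the set of abstainers depends only on $K$, and a voter $i \notin K$ abstains in the equilibrium with gurus $K$ if and only if she prefers abstention over each of her (at most two) closest elements of $K$; voters of $K$ never abstain.

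Next I would encode this count as arc weights on $\Gaux$. For an arc $(k,k')$ with $k,k' \in \VoterSet$ and $k<k'$, let $a_{k,k'}$ be the number of voters $i$ with $k < i < k'$ such that $0 \succ_i k$ and $0 \succ_i k'$; for an arc $(s,j)$, let $a_{s,j}$ be the number of voters $i < j$ with $0 \succ_i j$; for an arc $(i,t)$, let $a_{i,t}$ be the number of voters $i' > i$ with $0 \succ_{i'} i$. Given an $s-t$ path with vertex set $K = \{k^1 < \dots < k^p\}$, every voter not in $K$ lies strictly between two consecutive path vertices, or strictly before $k^1$, or strictly after $k^p$, and is therefore counted in exactly one of the weights of the arcs of the path, while every voter of $K$ is itself a path vertex and is counted in none. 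Hence the number of abstainers of the equilibrium with gurus $K$ equals the total weight of the corresponding path, and \textbf{MINABST} reduces to finding a minimum-weight $s-t$ path in $\Gaux$.

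For the running time: $\Gaux$ is computed in $O(n^2)$ by Lemma~\ref{th:sp1}, and it is acyclic with topological order $s,1,\dots,n,t$, so once the weights are known a minimum-weight $s-t$ path is found in $O(n^2)$ by dynamic programming along this order; standard bookkeeping recovers the optimal path, from which an optimal equilibrium is built as in Section~\ref{sec:generalPreferences}. Computing a single weight $a_{k,k'}$ takes $O(n)$ time, and there are $O(n^2)$ arcs of the first type, which gives $O(n^3)$ for all weights (the $O(n)$ source and sink weights cost only $O(n^2)$ in total); this already yields the announced $O(n^3)$ bound and completes the proof of Theorem~\ref{thrm:auxdigraph}.

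The step I would be most careful about is the first one: the assertion that the abstainer-count depends only on $K$ and is captured by the ``prefers abstention over the (at most two) closest gurus'' criterion. This is exactly where single-peakedness is used, since it lets us replace a comparison of abstention against all of $K$ by a comparison against two neighbours (Observation~\ref{obs:affectGuruSP}), and one must separately verify the boundary arcs incident to $s$ and to $t$. The only remaining point is the degenerate profile in which every voter is an abstainer, so that the empty set is the unique kernel; there one simply adds to $\Gaux$ an $s-t$ arc of weight $n$ whenever $\GPWA$ has no vertex. Everything else is the same routine bookkeeping as in Lemmas~\ref{th:sp3} and~\ref{th:sp4}.
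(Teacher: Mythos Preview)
The proposal is correct and follows essentially the same approach as the paper: define arc weights $a_{k,k'}$ counting voters between consecutive gurus who prefer abstention, invoke Observation~\ref{obs:affectGuruSP} to argue these weights sum to the number of abstainers along any $s$--$t$ path, and reduce \textbf{MINABST} to a shortest $s$--$t$ path in $\Gaux$ with an $O(n^3)$ weight computation. Your treatment is slightly more explicit about the boundary arcs incident to $s$ and $t$ and about the degenerate all-abstainer case, but the argument is the same.
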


\begin{proof}
For every arc $(k,k')$ in the auxiliary digraph $\Gaux$, define $a_{kk'}$ as the number of voters in $\{k+1, \dots, k'-1\}$ whose preferred guru in $\{0, k,k'\}$ is $0$.
By Observation~\ref{obs:affectGuruSP}, if $d$ is an equilibrium in which $k$ and $k'$ are two successive gurus, then for every voter $i \in \{ k+1, \dots, k'-1\}$, the guru $\GuruOf(i,d)$ of $i$ is the most preferred guru of $i$ in $\{0,k,k'\}$, and $i$ abstains if and only if $0$ is her preferred guru in $\{0,k,k'\}$. Hence, the number of voters of $\{k+1, \dots, k'-1\}$ who abstain in $d$ is exactly $a_{kk'}$. Thus for any equilibrium $d$, associated with an $s-t$ path in the auxiliary digraph, the total number of voters who abstain in $d$ is equal to the sum of arc-weights $a$ over the path. Hence an optimal solution to the problem \textbf{MINABST} can then be computed by searching for a shortest $s-t$ path for the weights $a$.
All arc-weights $a$ can be computed in $O(n^3)$ time. Hence an optimal solution to \textbf{MINABST} can be computed in $O(n^3)$ time.
\end{proof}

\section{Appendix of Section~\ref{sec:symmetrical}}

\subsection{Proof of Observation~\ref{obsguc}}

{\bf Observation~\ref{obsguc}} {\it $G_{U,C}$ has a kernel containing no clause vertex if and only if $(U,C)$ is satisfiable.}

\begin{proof}
If $(U,C)$ has a satisfying assignment, then consider in $G_{U,C}$ the set of variable vertices corresponding to true literals. This set is clearly independent, and absorbing since every clause is satisfied by the assignment. Conversely, a kernel containing no clause vertex must contain exactly one variable vertex  among $v^x_{it}$ and $v^x_{if}$ (for each $i$). Since the set is absorbing, the literals corresponding to this kernel satisfy all the clauses. 
\end{proof}

\subsection{Proof of Theorem~\ref{th:hardnesssymmetric}}

{\bf Theorem \ref{th:hardnesssymmetric}}
{\it Given a symmetrical preference profile $P$:\\ 
-- it is NP-hard to decide whether there exists an equilibrium where no voter abstains, or not. Thus, in particular, {\bf MINABST} is NP-hard.\\
-- {\bf MINDIS} is NP-hard  even if there are no abstainers.\\
-- {\bf MINMAXVP} is NP-hard  even if there are no abstainers.\\
}

The first item has been proven in the main body of the article. We now prove the two other items. For the sake of readability, we prove each of them in a separate lemma (lemmas~\ref{lemma:mindishardnesssymmetric} and \ref{lemma:minmaxvphardness}).

\begin{lemma}\label{lemma:mindishardnesssymmetric}
{\bf MINDIS} is NP-hard in the case of  symmetrical preference profiles, even if there are no abstainers.
\end{lemma}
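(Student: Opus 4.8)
The plan is to reduce from 3-SAT using the same gadget digraph $G_{U,C}$ introduced in Section~\ref{sec:symmetrical}, augmented so that a low-dissatisfaction equilibrium is forced to avoid all clause vertices as gurus, which by Observation~\ref{obsguc} encodes a satisfying assignment. First I would take a 3-SAT instance $(U,C)$ and build a symmetrical preference profile whose delegation-acceptability digraph is the undirected gadget $G_{U,C}$: the voters are the $2n_u$ variable vertices $v^x_{it}, v^x_{if}$ and the $n_c$ clause vertices $v^c_j$, with acceptability exactly mirroring the edges of $G_{U,C}$, and with every voter preferring to vote over abstaining so that there are no abstainers. To control dissatisfaction I would fix the internal preference orders: each variable vertex should rank its partner variable vertex first, then vote, so that when it is a guru its dissatisfaction is $1$ and when it delegates to its partner its dissatisfaction is $0$; each clause vertex ranks its three literal-neighbours in some order, then itself, so that a clause vertex delegating to a satisfied literal incurs dissatisfaction at most $2$ while a clause vertex being a guru incurs dissatisfaction $3$.

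The key calculation is then to pin down the minimum total dissatisfaction over all equilibria. By Theorem~\ref{th:equivGurusKernel} an equilibrium corresponds to a kernel $K$ of $G_{U,C}$. If $(U,C)$ is satisfiable, take $K$ to be the set of variable vertices corresponding to true literals (a kernel by Observation~\ref{obsguc}): for each variable the guru contributes dissatisfaction $1$ and its partner contributes $0$, and each clause vertex delegates to a true literal among its neighbours, contributing dissatisfaction at most $2$; this gives a bound of the form $n_u + 2n_c$ (or whatever precise constant the chosen orders yield). Conversely I would argue that any kernel $K$ that uses at least one clause vertex as a guru must pay strictly more: a clause-vertex guru costs $3$ instead of the $\le 2$ it would cost as a delegator, and — the point requiring care — one must show this extra cost cannot be recouped elsewhere. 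This is where a bit of bookkeeping is needed: each of the $n_u$ variable-vertex pairs is an edge of the gadget, so every kernel contains exactly one endpoint of each such pair, contributing a fixed $n_u$ regardless of the assignment; hence the only freedom is in the clause vertices, and a kernel avoiding all clause vertices achieves cost exactly $n_u + 2n_c$, while any other kernel has cost at least $n_u + 2n_c + 1$. Therefore $(U,C)$ is satisfiable if and only if the profile admits an equilibrium of total dissatisfaction $\le n_u + 2n_c$, completing the reduction, which is clearly polynomial.

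The main obstacle I expect is the converse direction: ruling out that a cheaper equilibrium could exist that does use clause vertices as gurus but compensates by some clever structure. The delicate part is that making $v^c_j$ a guru removes the need for its three literal neighbours to be absorbed by a variable vertex, which could in principle free up the variable-vertex part of the kernel. I would handle this by exploiting that the variable-vertex pairs are isolated edges in the gadget except for their connections to clause vertices: since every variable pair contributes exactly one vertex to any kernel and the variable-vertex choices only affect which clauses are ``satisfied'' (i.e., absorbed without needing a clause-vertex guru), no rearrangement of variable choices can lower the $n_u$ baseline, so the accounting reduces cleanly to counting clause-vertex gurus. Once this monotonicity is established, the equivalence with satisfiability follows directly from Observation~\ref{obsguc}, and the absence of abstainers is immediate since every voter prefers voting to abstaining.
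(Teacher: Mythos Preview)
Your construction has a genuine gap at the very first step: the preference orders you specify are \emph{not} symmetrical. You want the delegation-acceptability digraph to be $G_{U,C}$, which has edges between each clause vertex $v^c_j$ and its three literal vertices. For symmetry, a literal vertex $v^x_{it}$ must therefore accept (rank above itself) every clause vertex it is adjacent to. But you stipulate that each variable vertex ``ranks its partner variable vertex first, then vote'', which puts all clause neighbours below the vote position and hence outside $\Acc(v^x_{it})$. With your orders the arc $v^c_j \to v^x_{it}$ is present but $v^x_{it} \to v^c_j$ is not, so the profile is not symmetrical and the lemma's hypothesis fails.

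If you repair this by inserting the clause neighbours between the partner and the vote, the accounting you rely on collapses. A variable vertex that is a guru now has dissatisfaction $1 + (\text{number of incident clauses})$, which depends on the formula and on which literal of each pair you pick; your clean ``$n_u$ baseline'' disappears. Moreover, your structural claim that ``every kernel contains exactly one endpoint of each variable pair'' is false once clause vertices can be gurus: independence gives \emph{at most} one, but both endpoints can be absent if each is absorbed by an incident clause-vertex guru. So the converse direction---that any kernel using a clause vertex as guru must cost strictly more---does not follow from the bookkeeping you propose; the savings on the variable side can genuinely offset the extra cost of a clause-vertex guru.

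The paper handles exactly this difficulty by \emph{augmenting} $G_{U,C}$ with an additional clique of $k = 3n_c + n_u + n_u n_c$ voters, all adjacent to every clause vertex, with a Latin-square preference pattern inside the clique. This amplification makes any equilibrium in which a clause vertex (or a non-designated clique vertex) is a guru incur $\Theta(k^2)$ dissatisfaction, dwarfing any possible savings elsewhere; one then checks that the only way to stay below the threshold $2k$ is to have $v^*$ as the sole clique guru and no clause-vertex gurus, which via Observation~\ref{obsguc} forces a satisfying assignment. Your intuition that some augmentation is needed was right; the bare gadget with carefully chosen orders is not enough.
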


\begin{proof} 
Let us consider a 3-SAT instance with a set $U$ of variables and a set $C$ of clauses. We will create a preference profile the delegation acceptability digraph of which is made of:
\begin{itemize}
	\item The digraph $G_{U,C}$ associated to $(U,C)$;
    \item A clique $\{v^*,v^*_1,\ldots,v^*_{k-1}\}$ of $k$ vertices (the value of $k$ will be given later). Every vertex of the clique is adjacent to every clause vertex of $G_{U,C}$.
\end{itemize}

Thus, we build a profile on $2n_u+n_c+k$ voters: $2n_u$ `variable voters', $n_c$ `clause voters', and $k$ `clique voters'. Every voter prefers to vote than to abstain (abstention will be the last preferred option for all voters). 
Each agent in $\{v^*_1,\ldots,v^*_{k-1}\}$ have $v^*$ as her first choice. Then the preferences of voters in $\{v^*_1,\ldots,v^*_{k-1}\}$ form a Latin-square, i.e., voter $v^*_i$'s second choice is $v^*_{(i \mod k-1)+1}$, third choice is $v^*_{(i+1 \mod k-1)+1}$ and so on until $v^*_{(i+k-3 \mod k-1)+1}$ is reached. Then agent $v^*_i$ prefers to delegate to voters $v^c_i$, $i\in\{1,\ldots,n_c\}$, then she prefers to vote. Agent $v^*$ prefers to delegate to voters in $\{v^*_1,\ldots,v^*_{k-1}\}$, then she prefers to delegate for voters in $\{v^c_i|i=1,\ldots,n_c\}$, then she prefers to vote.

For example, if we assume $k=4$ and the previous 3-SAT instance ($U= \{x_1,x_2,x_3,x_4,x_5\}$ and $C = \{ (x_1 \lor x_2 \lor \lnot x_3), (\lnot x_2 \lor \lnot x_4\lor x_1), (\lnot x_1 \lor x_3\lor x_5)\}$), then possible preferences could be:
\begin{align*}
v^*  &: v^*_1 \succ_{v^*} v^*_2 \succ_{v^*} v^*_3 \succ_{v^*} v^c_1     \succ_{v^*} v^c_2 \succ_{v^*} v^c_3 \succ_{v^*}  v^* \\ 
v^*_1&: v^* \succ_{v^*_1} v^*_2 \succ_{v^*_1} v^*_3 \succ_{v^*_1}  v^c_1 \succ_{v^*_1} v^c_2 \succ_{v^*_1} v^c_3 \succ_{v^*_1} v^*_1 \\ 
v^*_2&: v^* \succ_{v^*_2} v^*_3 \succ_{v^*_2} v^*_1 \succ_{v^*_2}  v^c_1 \succ_{v^*_2} v^c_2 \succ_{v^*_2} v^c_3 \succ_{v^*_2} v^*_2 \\ 
v^*_3&: v^* \succ_{v^*_3} v^*_1 \succ_{v^*_3} v^*_2 \succ_{v^*_3}  v^c_1 \succ_{v^*_3} v^c_2 \succ_{v^*_3} v^c_3 \succ_{v^*_3} v^*_3
\end{align*} 

Every agent in $\{v^c_i|i=1,\ldots,n_c\}$ first prefers to delegate to the 3 voters in $\{v^x_{it}, v^x_{if}| n= 1, \ldots, n_u\}$ corresponding to the literals of their clause. Then, they prefer to delegate to the voters in $\{v^*_1,\ldots,v^*_{k-1}\}$. Then they prefer to delegate to $v^*$ and then they prefer to vote. For example, if we assume $k=4$ then possible preferences  w.r.t. the previous 3-SAT instance  could be:
\begin{align*}
v^c_1  &: v^x_{1t} \succ_{v^c_1} v^x_{2t} \succ_{v^c_1} v^x_{3f} \succ_{v^c_1} v^*_1 \succ_{v^c_1} v^*_2 \succ_{v^c_1} v^*_3  \succ_{v^c_1} v^* \succ_{v^c_1}  v^c_1 \\ 
v^c_2  &: v^x_{2f} \succ_{v^c_2} v^x_{4f} \succ_{v^c_2} v^x_{1t} \succ_{v^c_2} v^*_1 \succ_{v^c_2} v^*_2 \succ_{v^c_2} v^*_3  \succ_{v^c_2} v^* \succ_{v^c_2}  v^c_2 \\ 
v^c_3  &: v^x_{1f} \succ_{v^c_3} v^x_{3t} \succ_{v^c_3} v^x_{5t} \succ_{v^c_3} v^*_1 \succ_{v^c_3} v^*_2 \succ_{v^c_3} v^*_3  \succ_{v^c_3} v^* \succ_{v^c_3}  v^c_3  
\end{align*}

Lastly, each agent $v^x_{it}$ (resp.  $v^x_{if}$) first prefers to delegate to $v^x_{if}$ (resp.  $v^x_{it}$), then to delegate to voters in $\{v^c_i|i=1,\ldots,n_c\}$ corresponding to clauses that include variable $x_i$ (resp. the negation of variable $x_i$), then they prefer to vote directly. For example, possible preferences  w.r.t. the previous 3-SAT instance could be:
\begin{align*}
v^x_{1t} &: v^x_{1f} \succ_{v^x_{1t}} v^c_1 \succ_{v^x_{1t}} v^c_2 \succ_{v^x_{1t}} v^x_{1t} \\
v^x_{2t} &: v^x_{2f} \succ_{v^x_{2t}} v^c_1 \succ_{v^x_{2t}} v^x_{2t} \\
v^x_{3t} &: v^x_{3f} \succ_{v^x_{3t}} v^c_3 \succ_{v^x_{3t}} v^x_{3t} \\
v^x_{4t} &: v^x_{4f} \succ_{v^x_{4t}} v^x_{4t} \\
v^x_{5t} &: v^x_{5f} \succ_{v^x_{5t}} v^c_3 \succ_{v^x_{5t}} v^x_{5t} \\
v^x_{1f} &: v^x_{1t} \succ_{v^x_{1f}} v^c_3 \succ_{v^x_{1f}} v^x_{1f} \\
v^x_{2f} &: v^x_{2t} \succ_{v^x_{2f}} v^c_2 \succ_{v^x_{2f}} v^x_{2f}\\
v^x_{3f} &: v^x_{3t} \succ_{v^x_{3f}} v^c_1 \succ_{v^x_{3f}} v^x_{3f} \\
v^x_{4f} &: v^x_{4t} \succ_{v^x_{4f}} v^c_2 \succ_{v^x_{4f}} v^x_{4f}\\
v^x_{5f} &: v^x_{5t} \succ_{v^x_{5f}} v^x_{5f}
\end{align*}
We fix $k=3n_c+n_u+n_un_c$ and show that the 3-SAT instance is satisfiable if and only if there exists an equilibrium  with dissatisfaction at most $2k$.

Assume first that the 3-SAT instance is satisfiable. Then, $v^*$ plus the $n_u$ variable vertices in $G_{U,C}$ corresponding to true literals form a kernel in the delegation acceptability digraph. Let us consider the corresponding delegation function where: 
\begin{itemize}
	\item the $n_u+1$ voters in the kernel vote. The dissatisfaction of $v^*$ is $(k-1)+n_c$, the one of each voter corresponding to a true literal is at most $1+n_c$.
    \item voters $v^*_i$ delegate to $v^*$, they have dissatisfaction $0$;
    \item clause voters delegate to a variable voter corresponding to a true literal in the clause, thus with a dissatisfaction at most $2$.
    \item (variable) voters corresponding to false literals delegate to the opposite (true) literal and have dissatisfaction 0.
\end{itemize}

Thus, the dissatisfaction of this equilibrium is at most $k-1+n_c+n_u(1+n_c)+2n_c=2k-1$.\\

Conversely, assume that there is an equilibrium with dissatisfaction at most $2k$. If a voter $v^*_s$ ($\neq v^*$) in the clique votes, then no other clique voter votes, and this already induces a dissatisfaction at least $\sum_{i=1}^{k-2}i$
$ =\frac{(k-1)(k-2)}{2}>2k$ (for $k\geq 7$) for the other $k-2$ vertices $v^*_j,j\neq s$. Thus this is not possible. Similarly, if a clause voter votes, then no voter in the clique can vote, and this already induces a dissatisfaction at least $(k-1)$ for each voter in the clique, thus a global dissatisfaction at least $k(k-1)>2k$ (for $k>4$). 

Then, in the considered equilibrium, $v^*$ votes, no other voter in the clique votes, and no clause voter votes. Then for any $i$ exactly one voter among $v^x_{it}$ and $v^x_{if}$ votes. We conclude the proof by showing that the assignment where a literal is true if the corresponding voter votes is a satisfying assignment. Note that $v^*$ has dissatisfaction $(k-1)+n_c$. If a clause voter delegates to $v^*$, then it has dissatisfaction $k+2$, so the global dissatisfaction is greater than $2k$, impossible. This means that each clause vertex delegates to a voting variable vertex, and thus all the clauses are satisfied by the assignment. 
\end{proof}

\begin{lemma}\label{lemma:minmaxvphardness}
{\bf MINMAXVP} is NP-hard in the case of symmetrical preference profiles, even if there are no abstainers.
\end{lemma}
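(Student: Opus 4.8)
The plan is to reduce from 3-SAT, reusing the gadget digraph $G_{U,C}$ but augmenting it so that turning a clause vertex into a guru becomes prohibitively costly in voting power. Set $p = n_c+2$. Starting from $G_{U,C}$, attach to each clause vertex $v^c_j$ a private clique $A^j = \{v^c_j, a^j_1, \dots, a^j_p\}$ of $p$ new ``blocker'' vertices (all mutually adjacent and adjacent to $v^c_j$, and to nothing else). Read the resulting symmetric graph as a symmetrical preference profile in which $\Acc(i)$ is the neighbourhood of $i$, abstention is ranked last by everyone (so there are no abstainers), and each clause vertex $v^c_j$ ranks its three literal vertices above the blockers $a^j_1,\dots,a^j_p$; all remaining ties are irrelevant. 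I claim this profile admits a Nash-stable $d$ with $\max_{i\in\Gurus(d)}\VotingPower(i,d)\le p$ if and only if $(U,C)$ is satisfiable; since the construction is polynomial, this proves \textbf{MINMAXVP} is NP-hard, and no voter abstains.

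For the forward direction, take a satisfying assignment, let $T$ be the set of variable vertices of its true literals, and set $K = T \cup \{a^j_1 : j=1,\dots,n_c\}$. By (the proof of) Observation~\ref{obsguc}, $T$ is a kernel of $G_{U,C}$; adding one blocker per clique keeps $K$ independent and lets it dominate the remaining blockers, so $K$ is a kernel of $\GPWA$ and hence, by Theorem~\ref{th:equivGurusKernel}, the guru set of the equilibrium $d$ in which every non-guru delegates to her most preferred acceptable guru in $K$. Then each $a^j_\ell$ with $\ell\ge 2$ delegates to $a^j_1$, while $v^c_j$, preferring literals, delegates to a true literal of $c_j$, so $\VotingPower(a^j_1,d)=p$; and each $v^x_{it}\in T$ absorbs only its partner and at most the $n_c$ clause vertices, so $\VotingPower(v^x_{it},d)\le n_c+2 = p$. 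Hence $\max_{i\in\Gurus(d)}\VotingPower(i,d)\le p$.

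For the converse, let $d$ be any equilibrium and $K=\Gurus(d)$, a kernel (maximal independent set) of $\GPWA$. If some clause vertex $v^c_j\in K$, then all $p$ blockers of $A^j$ lie outside $K$ and, having $v^c_j$ as their only neighbour in $K$, delegate to it, so $\VotingPower(v^c_j,d)\ge p+1$. Otherwise $K$ contains no clause vertex; then for each variable exactly one of $v^x_{it},v^x_{if}$ lies in $K$ (else $v^x_{it}$ would have no neighbour in $K$), so $K$ encodes a truth assignment, which — $(U,C)$ being unsatisfiable — leaves some clause $c_j$ with no literal vertex in $K$. Since $v^c_j\notin K$ must still be dominated, exactly one blocker $a^j_\ell$ lies in $K$; it is $v^c_j$'s only neighbour in $K$, so $v^c_j$ delegates to it, as do the $p-1$ other blockers of $A^j$, giving $\VotingPower(a^j_\ell,d)\ge p+1$. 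In every case $\max_{i\in\Gurus(d)}\VotingPower(i,d)\ge p+1>p$, which proves the equivalence.

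The step I expect to be the crux is getting the blocker gadget right. Naive amplifications fail: pendant vertices hung on $v^c_j$ would, once forced into the kernel, dominate $v^c_j$ ``for free'' and sever the link with satisfiability, while merely duplicating clause vertices floods the chosen literal vertices with delegations and breaks the forward bound. Making the blockers a clique attached to $v^c_j$ is exactly what guarantees that the unavoidable alternative way of covering $v^c_j$ (putting some blocker in the kernel) still costs voting power $p+1$, matching the cost of putting $v^c_j$ itself in the kernel; keeping the cliques private and vertex-disjoint, and taking $p=n_c+2$, then lets the satisfiable construction meet the threshold $p$ exactly while also dominating the worst-case load $n_c+2$ of a literal guru.
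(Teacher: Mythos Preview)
Your argument is correct. Both you and the paper reduce from 3\hbox{-}SAT starting from the gadget digraph $G_{U,C}$, but the amplification gadgets differ. The paper attaches a \emph{single} clique $\{v_1,\dots,v_{n_c+2}\}$ adjacent to all clause vertices, together with pendant vertices $v'_1,\dots,v'_{n_c+2}$, and crucially gives all clique voters the \emph{same} ranking of the clause vertices so that if any clause vertex is a guru the whole clique funnels to it; the threshold is ``strictly less than $n_c+3$''. Your construction instead hangs a \emph{private} clique of $n_c+2$ blockers on each clause vertex, so that covering an unsatisfied clause forces either $v^c_j$ or one of its blockers to absorb the rest of that clique. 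What you buy is modularity: beyond ``clause vertices prefer literals to blockers'' you need no delicate preference specification, whereas the paper's single shared clique only works because of the carefully chosen uniform ordering. What the paper buys is size: its instance has $2n_u+3n_c+4$ voters, whereas yours has $2n_u+n_c+n_c(n_c+2)$, quadratic in $n_c$. Both are polynomial, so both prove the lemma.
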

\begin{proof}
Let us consider a 3-SAT instance with a set $U$ of variables and a set $C$ of clauses. We will create a preference profile, the delegation acceptability (symmetric) digraph of which is made of:
\begin{itemize}
	\item The digraph $G_{U,C}$ associated to $(U,C)$;
    \item A clique $\{v_1,\ldots,v_{n_c+2}\}$ of $n_c+2$ vertices. Every vertex of the clique is adjacent to every clause vertex of $G_{U,C}$.
    \item An independent set $\{v'_1,\ldots,v'_{n_c+2}\}$ of $n_c+2$ vertices. Every vertex $v'_i$ is adjacent to $v_i$.
\end{itemize}

Figure~\ref{MMGRed} illustrates the digraph corresponding to the following 3-SAT instance: 
\begin{align}
U &= \{x_1,x_2,x_3,x_4,x_5\} \label{form3a}\\
C &= \{ (x_1 \lor x_2 \lor \lnot x_3), (\lnot x_2 \lor \lnot x_4\lor x_1), (\lnot x_1 \lor x_3\lor x_5)\} \label{form3b}
\end{align}

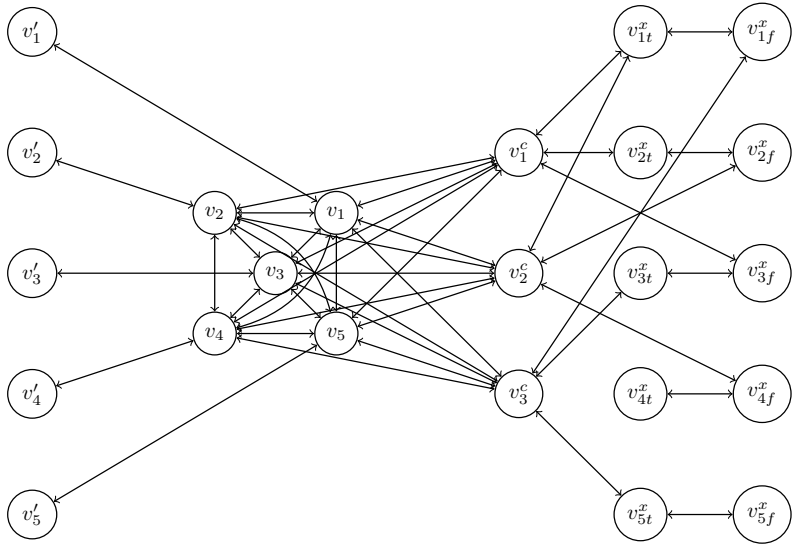
\begin{figure}[!h] 
   \centering
    \scalebox{0.8}{\begin{tikzpicture}[<->,auto,node distance=3cm,semithick]
  \node[circle,draw,text=black] (AT)   at (6,4)                 {$v^x_{1t}$};
  \node[circle,draw,text=black] (BT)   at (6,2)                 {$v^x_{2t}$};
  \node[circle,draw,text=black] (CT)   at (6, 0)                 {$v^x_{3t}$};
  \node[circle,draw,text=black] (DT)   at (6, -2)                {$v^x_{4t}$};
  \node[circle,draw,text=black] (ET)   at (6, -4)                {$v^x_{5t}$};
  \node[circle,draw,text=black] (AF)   at (8,4)                 {$v^x_{1f}$};
  \node[circle,draw,text=black] (BF)   at (8,2)                 {$v^x_{2f}$};
  \node[circle,draw,text=black] (CF)   at (8, 0)                 {$v^x_{3f}$};
  \node[circle,draw,text=black] (DF)   at (8, -2)                {$v^x_{4f}$};
  \node[circle,draw,text=black] (EF)   at (8, -4)                {$v^x_{5f}$};
  \node[circle,draw,text=black] (C1)   at (4,2)                 {$v^c_1$};
  \node[circle,draw,text=black] (C2)   at (4,0)                 {$v^c_2$};
  \node[circle,draw,text=black] (C3)   at (4,-2)                 {$v^c_3$};
  
  \node[circle,draw,text=black] (TC1)   at (1,1)                 {$v_1$};
  \node[circle,draw,text=black] (TC5)   at (1,-1)                 {$v_5$};
  \node[circle,draw,text=black] (TC2)   at (-1,1)                {$v_2$};
  \node[circle,draw,text=black] (TC4)   at (-1,-1)                {$v_4$};
  \node[circle,draw,text=black] (TC3)   at (0,0)                {$v_3$};
  
  \node[circle,draw,text=black] (TL1)   at (-4,4)                 {$v_1'$};
  \node[circle,draw,text=black] (TL2)   at (-4,2)                 {$v_2'$};
  \node[circle,draw,text=black] (TL3)   at (-4,0)                {$v_3'$};
  \node[circle,draw,text=black] (TL4)   at (-4,-2)                {$v_4'$};
  \node[circle,draw,text=black] (TL5)   at (-4,-4)                {$v_5'$};
  
  \path
 	    (TC1) edge  node {} (C1)
   	    (TC1) edge  node {} (C2)
   	    (TC1) edge  node {} (C3)
        
        (TC2) edge  node {} (C1)
   	    (TC2) edge  node {} (C2)
   	    (TC2) edge  node {} (C3)
        
        (TC3) edge  node {} (C1)
   	    (TC3) edge  node {} (C2)
   	    (TC3) edge  node {} (C3)
        
        (TC4) edge  node {} (C1)
   	    (TC4) edge  node {} (C2)
   	    (TC4) edge  node {} (C3)
        
        (TC5) edge  node {} (C1)
   	    (TC5) edge  node {} (C2)
   	    (TC5) edge  node {} (C3)
        
        (TC1) edge  node {} (TC2)
        (TC1) edge  node {} (TC3)
        (TC1) edge[bend left]  node {} (TC4)
        (TC1) edge  node {} (TC5)
        
        (TC2) edge  node {} (TC3)
        (TC2) edge  node {} (TC4)
        (TC2) edge[bend left]  node {} (TC5)
        
        (TC3) edge  node {} (TC4)
        (TC3) edge  node {} (TC5)
        
        (TC4) edge  node {} (TC5)
        
        (TL1) edge  node {} (TC1)
        (TL2) edge  node {} (TC2)
        (TL3) edge  node {} (TC3)
        (TL4) edge  node {} (TC4)
        (TL5) edge  node {} (TC5)
        
  	    (C1) edge  node {} (AT)
        (C1) edge  node {} (BT)
        (C1) edge  node {} (CF)

  	    (C2) edge  node {} (BF)
        (C2) edge  node {} (DF)
        (C2) edge  node {} (AT)

  	    (C3) edge  node {} (AF)
        (C3) edge  node {} (CT)
        (C3) edge  node {} (ET)

        (AT) edge node {} (AF)
        (BT) edge node {} (BF)
        (CT) edge node {} (CF)
        (DT) edge node {} (DF)
        (ET) edge node {} (EF);
	  \end{tikzpicture}}
      \caption{Delegation acceptability digraph.}
     \label{MMGRed}
\end{figure}

Thus we build a profile on $2n_u+3n_c+4$ voters, all of them prefer to vote than to abstain. 

We first precise the preferences of voters in $\{v_1, \ldots, v_{n_c+2}\}$. Each voter in $\{v_1, \ldots, v_{n_c+2}\}$ prefers first to delegate to voters in $\{v_j^c|j=1,\ldots, n_c\}$ and in the same order, $v_1^c$ first, $v_2^c$ second and so on. Then, they prefer to delegate to the other voters in $\{v_1, \ldots, v_{n_c+2}\}$, then to the corresponding $v_{j}'$ and lastly, they prefer to vote.
For example, in the instance described by Equations \ref{form3a} and \ref{form3b}, possible preferences are given by:
\begin{align*}
v_1  &: v^c_1 \succ_{v_1} v^c_2 \succ_{v_1} v^c_3 \succ_{v_1} v_2     \succ_{v_1} v_3 \succ_{v_1} v_4 \succ_{v_1} v_5 \succ_{v_1}  v_1' \succ_{v_1} v_1 \\ 
v_2  &: v^c_1 \succ_{v_2} v^c_2 \succ_{v_2} v^c_3 \succ_{v_2} v_1 \succ_{v_2} v_3 \succ_{v_2} v_4 \succ_{v_2} v_5 \succ_{v_2} v_2' \succ_{v_2} v_2 \\ 
v_3  &: v^c_1 \succ_{v_3} v^c_2 \succ_{v_3} v^c_3 \succ_{v_3} v_1 \succ_{v_3} v_2 \succ_{v_3} v_4 \succ_{v_3} v_5 \succ_{v_3} v_3' \succ_{v_3} v_3 \\ 
v_4  &: v^c_1 \succ_{v_4} v^c_2 \succ_{v_4} v^c_3 \succ_{v_4} v_1 \succ_{v_4} v_2 \succ_{v_4} v_3 \succ_{v_4} v_5 \succ_{v_4} v_4' \succ_{v_4} v_4\\
v_5  &: v^c_1 \succ_{v_5} v^c_2 \succ_{v_5} v^c_3 \succ_{v_5} v_1 \succ_{v_5} v_2 \succ_{v_5} v_3 \succ_{v_5} v_4 \succ_{v_5} v_5' \succ_{v_5} v_5 
\end{align*} 
Without defining further the other preferences, we will show the following result: the 3-SAT instance is satisfiable iff there exists a Nash-stable delegation function in which each guru has a voting power which is strictly less than $n_c+3$. Indeed, note that if some voters in $\{v_j^c|j=1,\ldots, n_c\}$ are gurus, then one of them will be endorsed by all voters in $\{v_1, \ldots, v_{n_c+2}\}$ and will thus have a power of at least $n_c+3$. Additionally, if one voter in $\{v_1, \ldots, v_{n_c+2}\}$ is a guru, then she will collect the votes of all other voters in $\{v_1, \ldots, v_{n_c+2}\}$ and the vote of the corresponding $v_i'$ and will thus have a power of at least $n_c+3$. Hence, a Nash-stable delegation function in which each guru has a power which is strictly less than $n_c+3$ corresponds to a Nash-stable delegation function in which no voters in $\{v_1, \ldots, v_{n_c+2}\}\cup\{v_j^c|j=1,\ldots, n_c\}$ are gurus. This is possible if all voters in $\{v_1',\ldots, v_{n_c+2}'\}$ are gurus and if each voter in $\{v_j^c|j=1,\ldots, n_c\}$ delegates to a guru in $\{v_{it}^x, v_{if}^x|i=1,\ldots, n_u\}$. In this case, the power of a guru is at most $n_c+2$ and the gurus in $\{v_{it}^x, v_{if}^x|i=1,\ldots, n_u\}$ form a truth assignment that satisfies all clauses.

\end{proof}

\subsection{Proof of Theorem~\ref{th:cvbrd}}



{\bf Theorem~\ref{th:cvbrd}.}
{\it Given a symmetric preference profile $P$, a BRD dynamics always converges in at most 3 rounds.
}

\begin{proof}
Let us consider $j=T(t)$ for some step $t$. Suppose that $j$ decides to vote herself when she has the token at time $t$. Then, for any $t'\geq t$, $j$ remains a guru. Indeed, since she decides to vote at time $t$, it means that no voter in $\Acc(j)$ were gurus. While $j$ is a guru, then a voter $i\in \Acc(j)$ cannot become a guru under BRD since by symmetry $j\in \Acc(i)$. 

For a voter $j$, consider a time $t$ in the second round where she receives the token. 
\begin{itemize}
\item if $j$ decides to vote, from the previous argument she will be a guru forever after time $t$.
\item if $j$ delegates (directly or indirectly) to a guru $i$, since we are in the second round $i$  already had the token, already decided to be a guru, and thus by the previous argument will remain a guru forever. Then, $j$ will never be a guru ($i\in \Acc(j)$ will always be available as a guru).
\item If $j$ abstains, then she prefers to abstain than to vote so she will never be a guru.
\end{itemize}
This means that after the second round the set of gurus is fixed. Then, in the third round, gurus remain gurus, and non gurus choose their most preferred guru in the set of gurus, or abstain (if they prefer to abstain than to vote or to delegate to a guru). Thus we reach an equilibrium at the end of the third round.\\

Note that 3 rounds are necessary. Consider for instance a profile with 3 voters, 1 prefers 2 and then to vote, 3 prefers 2 and then to vote, and 2 prefers 1, then 3, and then to vote.
We give the token to 1,2,3 (first round), 2,3,1 (second round), 1, 3, 2 (third round).
Under BRD we get $d_1(1)=2$, $d_2(2)=3$, $d_3(3)=3$ (end of the first round, 3 is a guru), $d_4(2)=3$, $d_5(3)=3$, $d_6(1)=1$ (end of the second round, the set of gurus \{1,3\} is definitive), $d_7(1)=1$, $d_8(3)=3$, $d_9(2)=1$ (end of the third round, convergence).
\end{proof}

\section{Appendix of Section~\ref{sec:dbsn}}
\label{app:db}
\subsection{Example of Distance-Based profile}

\begin{example} \label{ex:DB}
Consider 5 voters that are points in the following 2-dimentional grid ($2$ is at coordinates $(0,0)$, $1$ is at coordinate $(0,1)$, $4$ at coordinates $(2,0)$,\dots). 


\begin{figure}[!h] 
   \centering
    \scalebox{1}{\begin{tikzpicture}[->,>=stealth',shorten >=1pt,auto,node distance=3cm,semithick]

  \node[circle,draw,text=black] (A)   at (0,2)                 {$1$};
  \node[circle,draw,text=black] (B)   at (0,0)                 {$2$};
  \node[circle,draw,text=black] (C)   at (2,0)                 {$3$};
  \node[circle,draw,text=black] (D)   at (4,0)                 {$4$};
  \node[circle,draw,text=black] (E)   at (2,2)                 {$5$};
  \node (F) at (4,2) {};
  
  \draw[dashed,-] (E) -- (6,2);
  \draw[dashed,->] (D) -- (6,0);
  \draw[dashed,-] (D) -- (4,3);
  \draw[dashed,->] (A) -- (0,3);
  \draw[dashed,-] (E) -- (2,3);
  
  \path (A) edge[dashed,-] node {} (B)
        (A) edge[dashed,-]  node {} (E)
  
  		(B) edge[dashed,-] node {} (C)
  	    (C) edge[dashed,-] node {} (E)
        
  	    (C) edge[dashed,-]  node {} (D);
	  \end{tikzpicture}}
    \caption{.}
    \label{grid}
\end{figure}
The distance is the Euclidean distance, and the acceptability thresholds are: $\At{1}=2$ (so $1$ accepts 2, 3 and 5 as a guru, but not 4), $\At{2}=1.5$, $\At{3}=2$, $\At{4}=1$ and $\At{5}=1$. A partial preference profile which is DB w.r.t. the Euclidean distance and the acceptability thresholds of the voters is given in Figure \ref{exDBSNAccpt}, together with its delegation-acceptability digraph (note that the delegation acceptability digraph is unique given the distance and the acceptability thresholds of the voters).

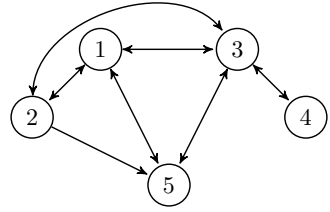
\begin{figure}
\begin{minipage}[c]{.2\linewidth}
\begin{align*}
1&:2\succ_1 3 \succ_1 5 \succ_1 1 \\
2&:1\succ_2 5  \succ_2 3 \succ_2 0 \\
3&:4\succ_3 1 \succ_3 5 \succ_3 2 \succ_3 3 \\
4&:3\succ_4 4 \\
5&:1\succ_5 3 \succ_5 5 \\
\end{align*}
\end{minipage}\hfill
\begin{minipage}[c]{.65\linewidth}

   \centering
    \scalebox{0.9}{\begin{tikzpicture}[->,>=stealth',shorten >=1pt,auto,node distance=3cm,semithick]

  \node[circle,draw,text=black] (A)   at (0,2)                 {$1$};
  \node[circle,draw,text=black] (B)   at (-1,1)                 {$2$};
  \node[circle,draw,text=black] (C)   at (2,2)                 {$3$};
  \node[circle,draw,text=black] (D)   at (3,1)                 {$4$};
  \node[circle,draw,text=black] (E)   at (1, 0)                 {$5$};
  
  \path (A) edge[ <->]  node {} (B)
        (A) edge[<->]  node {} (C)
        (A) edge[<->]  node {} (E)
  
  		(B) edge[<->, bend left=70] node {} (C)
  	    (B) edge[] node {} (E)
        
  	    (C) edge[<->]  node {} (D)
        (C) edge[<->]  node {} (E);
	  \end{tikzpicture}}
    
\end{minipage}

\caption{The delegation-acceptability digraph induced by the Euclidean distance on voters in Figure \ref{grid} and the acceptability thresholds of the voters.}
    \label{exDBSNAccpt}

\end{figure}

\subsection{Proof of Theorem~\ref{th:membharddb}}

{\bf Theorem \ref{th:membharddb}.}
{\it {\bf MEMB} is NP-hard in the case of DB preference profiles, even if there are no abstainers.}

\begin{proof} 
We build a reduction where the voters are vertices of a graph, and the distance between voters $i$ and $j$ is the shortest path (number of edges, the graph is unweighted) between the two vertices representing $i$ and $j$ in the graph. 

Let us consider a 3-SAT instance with a set $U$ of variables and a set $C$ of clauses. We consider a graph made of:
\begin{itemize}
	\item The undirected version of the graph $G_{U,C}$ associated to $(U,C)$ (see Figure~\ref{graphguc} in Section~\ref{sec:symmetrical});
    \item Two adjacent vertices $v_t$ and $v_q$; $v_t$ is also adjacent to all clause vertices $v^c_j$.
\end{itemize}

Thus we have $2n_u+n_c+2$ voters. As we said, we define $dist(i,j)$ as the shortest path between $i$ and $j$ in the graph. The acceptability threshold is 1 for all voters except $v_q$ which has an acceptability threshold of 2; they all prefer to vote than to abstain.

Let us show that the 3-SAT instance is satisfiable iff the DB preference profile induced by the corresponding distance admits a Nash-stable delegation function in which $v_q$ is a guru. 

Assume that there exists a Nash-stable delegation function $d$ in which $v_q$ is a guru. As $v_q$ is a guru, then voters $v_t$ and $v^c_j, \forall j \in \{1,\ldots,n_c\}$ cannot be gurus as they are at a distance of less than 2 from $v_q$. Contrarily to voter $v_t$ who accepts to delegate to $v_q$, each voter $v^c\in \{v^c_j|j=1,\ldots,n_c\}$ will necessarily delegate to one of the three voters corresponding to the literals of its clause. Lastly, note that as $v^x_{it}$ and $v^x_{if}$ are connected for all $i \in \{1,\ldots n_u\}$, then at most one of them can be a guru in a Nash stable delegation function. Furthermore, as the voters $v^c\in \{v^c_j|j=1,\ldots,n_c\}$ cannot be gurus, one voter out of $\{v^x_{it},v^x_{if}\}$ will have to be a guru for all $i \in \{1,\ldots n_u\}$. Now consider the truth assignment that sets to true a variable $x_i$ iff $v^x_{it}\in \Gurus(d)$. It is easy to check that this truth assignment satisfies each clause in $C$.

Conversely, if there exists a truth assignment $X$ satisfying each clause in $C$ then consider the delegation function $d$ such that $\Gurus(d)$ is composed of $v_q$, all variables $v^x_{it}$ such that $x_i$ is set to true in $X$ and all variables $v^x_{if}$ such that $x_i$ is set to false in $X$. For all voters $j$ not in $\Gurus(d)$, $d(j)$ is then given by the voter that $j$ prefers in $\Gurus(d)$. It is easy to see that $d$ is Nash-stable. 
\end{proof}

\subsection{Proof of Theorem~\ref{th:bdrconvdb}}

{\bf Theorem~\ref{th:bdrconvdb}.}
{\it Given a DB preference profile $P$, a BRD dynamics always converges.
}

\begin{proof}
Let us recall that we assume that each voter has the token infinitely many times. Consider a DB preference profile $P$, and a BRD dynamics with a starting delegation $d_0$ and a token function $T$. We assume that voters are numbered $1,2\dots,n$ in such a way that $\At{i}\leq \At{i+1}$, $i=1,\dots,n-1$. 

Let us define $G$ as the set of voters which are gurus (vote) infinitely many times in the dynamics: $G=\{i_1,\dots,i_s\}$ with $i_1\leq i_2\leq \dots \leq i_s$. Note that, obviously, $G$ contains no abstainers. Since voters in $V\setminus G$ are gurus finitely many times, let us consider a step $t_0$ such that, for any $t\geq t_0$, no voter in $V\setminus G$ are gurus (they always delegate or abstain). 

For $k\in \{1,2,\dots,s\}$, let us now define $t_k$ as the first time $t>t_{k-1}$ such that $i_k$ has the token and decides to vote (thus it is a guru).

We now show by recurrence on $k$ that for any $k\in \{1,2,\dots,s\}$, any $t\geq t_k$, $i_k$ is a guru at time $t$ ($i_k$ remains a guru forever after time $t_k$).

Consider $k=1$. At $t_1$, $i_1$ decides to vote. This means that no voter in $\Acc(i_1)$ is a guru. Then, while $i_1$ is a guru: 
\begin{itemize}
\item no voter $j>i_1$ in $\Acc(i_1)$ ever becomes a guru: indeed, since $\At{i}$ are in non decreasing order, if $j\in \Acc(i_1)$ then $i_1\in \Acc(j)$. While $i_1$ is a guru $j$ does not decide to vote.
\item no voter $j<i_1$ in $\Acc(i_1)$ ever becomes a guru: indeed, these are in $V\setminus G$ and since $t_1\geq t_0$ we know that they always delegate or abstain.
\end{itemize}
Then no voter in $\Acc(i_1)$ becomes a guru, so $i_1$ will vote (be a guru) forever.\\
The inductive step is almost similar. Suppose that the claim is true up to $k-1$, and consider step $k$. At $t_k$, $i_k$ decides to vote. This means that no voter in $\Acc(i_k)$ is a guru. Then, while $i_k$ is a guru: 
\begin{itemize}
\item no voter $j>i_k$ in $\Acc(i_k)$ ever becomes a guru, for the same reason as previously.
\item no voter $j<i_k$ in $\Acc(i_k)$ ever becomes a guru: if $j\in V\setminus G$ this follows as previously from the fact that $t_k\geq t_0$. If $j\in G$, $j<i_k$ implies that $j=i_f$ for some $f<k$. Then, by induction, $j$ is a guru forever from step $t_{f}<t_k$, thus she cannot be in $\Acc(i_k)$ (since at time $t_k$ $i_k$ decides to vote).   
\end{itemize}

Thus, at time $t_{s}$: voters in $G$ are gurus forever, and voters in $V\setminus G$ never become gurus. From $t_s$ we only have to wait for another round that each voter has the token one more time. Gurus will maintain their choice, while non gurus will choose (thanks to BRD) their most preferred guru in $G$, or abstain (if they prefer to abstain than to vote or to delegate to gurus). We reach a Nash-stable delegation function. 
\end{proof}

\end{example}

\end{document}